\documentclass[11pt]{article}
\pdfoutput=1
\usepackage{enumerate}
\usepackage{pdfsync}
\usepackage[OT1]{fontenc}
\usepackage{smile}
\usepackage[colorlinks,
            linkcolor=red,
            anchorcolor=blue,
            citecolor=blue
            ]{hyperref}
\usepackage{color, colortbl}
\usepackage{fullpage}
\usepackage{hyperref}
\usepackage[protrusion=true,expansion=true]{microtype}
\usepackage{pbox}
\usepackage{setspace}
\usepackage{tabularx}
\usepackage{float}
\usepackage{wrapfig,lipsum}
\usepackage{enumitem}
\usepackage{microtype}
\usepackage{graphicx}
\usepackage{subfigure}
\usepackage{enumerate}
\usepackage{enumitem}
\usepackage{pgfplots}

\usetikzlibrary{arrows,shapes,snakes,automata,backgrounds,petri}
\usepackage{booktabs} 

\allowdisplaybreaks

\newcommand{\la}{\langle}
\newcommand{\ra}{\rangle}
\def \algname {\text{CW-OFUL}}

\def \CC {\textcolor{red}}

\usepackage{enumitem}


\usepackage{colortbl}
\definecolor{LightCyan}{rgb}{0.8, 0.9, 1}
\usepackage[utf8]{inputenc} 
\usepackage[T1]{fontenc}    
\usepackage{hyperref}       
\usepackage{url}            
\usepackage{booktabs}       
\usepackage{amsfonts}       
\usepackage{nicefrac}       
\usepackage{microtype}      
\usepackage{xcolor}         

\def \CC {}

\usepackage{enumitem}


\makeatletter
\newcommand*{\rom}[1]{\expandafter\@slowromancap\romannumeral #1@}
\makeatother
\title{\huge Nearly Optimal Algorithms for Linear Contextual Bandits with Adversarial Corruptions}

%

\author
{
	Jiafan He\thanks{Department of Computer Science, University of California, Los Angeles, CA 90095, USA; e-mail: {\tt jiafanhe19@ucla.edu}} 
	~~~and~~~
	Dongruo Zhou\thanks{Department of Computer Science, University of California, Los Angeles, CA 90095, USA; e-mail: {\tt drzhou@cs.ucla.edu}} 
	~~~and~~~
	Tong Zhang\thanks{Google Research and  The Hong Kong University of Science and Technology}
	~~~and~~~
	Quanquan Gu\thanks{Department of Computer Science, University of California, Los Angeles, CA 90095, USA; e-mail: {\tt qgu@cs.ucla.edu}}
}

\begin{document}
\date{}
\maketitle

\begin{abstract}
We study the linear contextual bandit problem in the presence of adversarial corruption, where the reward at each round is corrupted by an adversary, and the corruption level (i.e., the sum of corruption magnitudes over the horizon) is $C\geq 0$. The best-known algorithms in this setting are limited in that they either \CC{are computationally inefficient or} require a strong assumption on the corruption, or their regret is at least $C$ times worse than the regret without corruption. In this paper, to overcome these limitations, we propose a new algorithm based on the principle of optimism in the face of uncertainty. At the core of our algorithm is a weighted ridge regression where the weight of each chosen action depends on its confidence up to some threshold. We show that for both known $C$ and unknown $C$ cases, our algorithm with proper choice of hyperparameter achieves a regret that nearly matches the lower bounds. Thus, our algorithm is nearly optimal up to logarithmic factors for both cases. Notably, our algorithm achieves the near-optimal regret for both corrupted and uncorrupted cases ($C=0$) simultaneously. 
\end{abstract}

\section{Introduction}

We study linear contextual bandits with adversarial corruptions. 
At each round, the agent observes an decision set provided by the environment, and selects an action from the decision set. Then an adversary \emph{corrupts} the reward of the action selected by the agent. The agent then receives the corrupted reward of the selected action and proceeds until $K$ rounds. The agent's goal is to minimize the regret $\text{Regret}(K)$, which is the difference between the optimal accumulated reward and the selected accumulated reward. This problem can be regarded as a combination of the two classical bandit problems, \emph{stochastic bandits} and \emph{adversarial bandits} \citep{lattimore2018bandit}. 
In practice, the contextual bandits with adversarial corruptions can describe many popular decision-making problems such as pay-per-click advertisements with click fraud \citep{lykouris2018stochastic} and recommendation system with malicious users \citep{deshpande2012linear}.

\citet{lykouris2018stochastic} first studied the multi-armed bandit with adversarial corruptions. Specifically, let $C$ denote the \emph{corruption level} which is the sum of the corruption magnitudes at each round. \citet{lykouris2018stochastic} proposed an algorithm with a  regret that is $C$ times worse than the regret without corruption.  Later, \citet{gupta2019better} proposed an improved algorithm whose regret consists of two terms: a \emph{corruption-independent} term that matches the optimal regret for multi-armed bandit without corruption, and a \emph{corruption-dependent} term that is linear in $C$ and independent of $K$, i.e., $\text{Regret}(K) = o(K) + O(C)$. The lower bound proved in \citet{gupta2019better} suggests that the linear dependence on $C$ is near-optimal. Such a regret structure reveals an desirable property of  corruption-robust bandit algorithms, that is, the algorithm should perform nearly the same as the bandit algorithms without corruption when the corruption level $C$ is small or diminishes.

Based on the above observation, a natural question arises: 
\begin{center}
   \CC{Can we design computationally efficient algorithms for linear contextual bandits with corruption that can attain the best possible regret, similar to those in multi-armed bandits?}
\end{center}
Some previous works have attempted to answer this  question for the simpler stochastic linear bandit setting, where the decision sets at each round are identical and finite. \citet{li2019stochastic} studied the stochastic linear bandits and proposed an instance-dependent regret bound. Later on, \citet{bogunovic2021stochastic} studied the same problem and proposed an algorithm that achieves a regret with the corruption term depending on $C$ linearly and on $K$ logarithmically. However, these algorithms are limited to the stochastic linear bandit setting since their algorithm design highly relies on the experiment design and arm-elimination techniques that require a multiple selection of the same action and can only handle finite decision set. They are not applicable to contextual bandits, where the decision set is changing over time and can even be infinite. For the more general linear contextual bandit setting, \citet{bogunovic2021stochastic} proved that a simple greedy algorithm based on linear regression can attain an ideal corruption term that has a linear dependence on $C$ and a logarithmic dependence on $K$, under a stringent diversity assumption on the contexts. \citet{lee2021achieving} proposed an algorithm and the corruption term in its regret depends on $C$ linearly and on $K$ logarithmically, but only holds for the restricted case when the corruption at each round is a linear function of the action. Without special assumptions on the contexts or corruptions, \citet{zhao2021linear, ding2021robust} proposed a variant of the OFUL algorithm \citep{abbasi2011improved} and its regret has a corruption term depending on $K$ polynomially. 
\CC{Recently, \citet{wei2022model} proposed a Robust VOFUL algorithm that achieves a regret with a corruption term linearly dependent on $C'$\footnote{\CC{In \citet{wei2022model}, the adversary adds corruption to all actions in the decision set before observing the agent's action and they define the corruption level $C'$ as the maximum corruption over the decision set. See Remark \ref{remark:discussion} for the formal definition and a more detailed discussion.}} and only logarithmically dependent on $K$. However, Robust VOFUL is computationally inefficient since it needs to solve a maximization problem over a nonconvex confidence set that is defined as the intersection of exponential number of sets, and its regret has a loose dependence on context dimension $d$. 
In addition, \citet{wei2022model} also proposed a Robust OFUL algorithm and provided a regret guarantee that has a linear dependence on a different notion of corruption level $C_r$\footnote{\CC{The corruption level $C_r$ is defined as $C_r=\sqrt{K \sum_{k=1}^K c_k^2}$, where $c_k\geq 0$ is the corruption magnitude at round $k$. As a comparison, $C=\sum_{k=1}^K |c_k|$. In the worst case, $C_r=O(\sqrt{K}C)$ and therefore the corruption term in the regret of Robust OFUL will depend on $K$ polynomially.}}, which is strictly larger than the corruption level $C$ considered in the previous work and the current paper.
}
Thus, the above question remains open. 

In this paper, we give an affirmative answer to the above question. We summarize our contributions as follows. 
\begin{itemize}[leftmargin = *]
\item We propose a computationally \CC{efficient} algorithm based on the principle of optimism in the face of uncertainty \citep{abbasi2011improved}, named Confidence-Weighted OFUL (CW-OFUL). At the core of our algorithm is a weighted ridge regression where the weight of each chosen arm is adaptive to its confidence, which is defined as the truncation of the inverse exploration bonus. Intuitively, such a weighting strategy prevents the algorithm from exploiting the contexts whose rewards are more likely corrupted by a large amount. 

    \item  For the case when the corruption level $C$ is known to the agent, we show that the proposed algorithm enjoys a regret $\text{Regret}(K) = \tilde O(d\sqrt{K} + dC)$, where $d$ is the dimension of the contexts, $C$ is the corruption level and $K$ is the number of total iterations. The first term matches the regret lower bound of linear contextual bandits without corruption $\Omega(d\sqrt{K})$ \citep{lattimore2018bandit}. The second term matches the lower bound on the corruption term in regret $\Omega(dC)$  \citep{bogunovic2021stochastic}. They together suggest that our algorithm is not only robust but also near-optimal up to logarithmic factors. 
    \item For the case when the corruption level $C$ is unknown to the agent, we show that $\algname$ enjoys an $\tilde O(d\sqrt{K})$ regret for the case $C \leq \sqrt{K}$, with proper choice of the hyperparameter. Surprisingly, by proving a lower bound on the regret, we show that our regret upper bound is already optimal for all algorithms that achieve a near-optimal regret bound for uncorrupted bandits.
\end{itemize}

We compare our regret bounds with previous ones in Table \ref{table:11}. We can see that our algorithm matches the lower bound up to logarithmic factors in both known $C$ and unknown $C$ cases, and therefore is nearly optimal.
\newcolumntype{g}{>{\columncolor{LightCyan}}c}
\begin{table*}[ht]
\caption{Comparisons of regrets for corrupted linear contextual bandits.}\label{table:11}
\centering
\begin{tabular}{ggg}
\toprule
\rowcolor{white} Algorithm & Regret& $C$\\
\midrule
\rowcolor{white} Robust weighted OFUL & & \\
\rowcolor{white} \small{\citep{zhao2021linear}}  & \multirow{-2}{*}{$\tilde O(d\sqrt{K} + dC\sqrt{K})$} & \multirow{-2}{*}{Known} \\
\rowcolor{white} Robust OFUL & & \\
\rowcolor{white} \small{\citep{wei2022model}}  & \multirow{-2}{*}{$\tilde O(d\sqrt{K}+C_r)$} & \multirow{-2}{*}{Known} \\
\rowcolor{white} Robust VOFUL & & \\
\rowcolor{white} \small{\citep{wei2022model}}  & \multirow{-2}{*}{$\tilde O(d^{4.5}\sqrt{K} +d^4C')$} & \multirow{-2}{*}{Known} \\
$\algname$ & & \\
\small{(Theorem \ref{theorem:known-C})}& \multirow{-2}{*}{$\tilde O(d\sqrt{K} + dC)$} & \multirow{-2}{*}{Known}\\
\rowcolor{white} Lower bound & & \\
\rowcolor{white} \small{\citep{lattimore2018bandit, bogunovic2021stochastic}}& \multirow{-2}{*}{$\Omega(d\sqrt{K} + dC)$} & \multirow{-2}{*}{Known}\\
\midrule
\rowcolor{white} Multi-level weighted OFUL & &  \\
\rowcolor{white} \small{\citep{zhao2021linear}}& \multirow{-2}{*}{$\tilde O(dC^2\sqrt{K}),\ C = \Omega(1)$} & \multirow{-2}{*}{Unknown} \\
\rowcolor{white} Greedy & & \\
\rowcolor{white} \small{\citep{bogunovic2021stochastic}}  & \multirow{-2}{*}{$\tilde O\big((\sqrt{dK}+C)/\lambda_0\big)^3$} & \multirow{-2}{*}{Unknown} \\
\rowcolor{white} COBE$+$OFUL & & \\
\rowcolor{white} \small{\citep{wei2022model}}  & \multirow{-2}{*}{$\tilde O(d\sqrt{K}+C_r)$} & \multirow{-2}{*}{Unknown} \\
\rowcolor{white} COBE$+$VOFUL & & \\
\rowcolor{white} \small{\citep{wei2022model}}  & \multirow{-2}{*}{$\tilde O(d^{4.5}\sqrt{K} +d^4C')$} & \multirow{-2}{*}{Unknown} \\
 $\algname (\bar{C}=\sqrt{K})$ & $\tilde O(d\sqrt{K}),\ C \leq \sqrt{K}$&  \\
 \small{(Theorem \ref{thm:unknown})}& $O(K),\ C \geq \sqrt{K}$ & \multirow{-2}{*}{Unknown}  \\
 Lower bound\footnotemark[4] & $\Omega(d\sqrt{K}),\ C\leq \sqrt{K}$ & \\
\small{(\citealt{lattimore2018bandit}, Theorem \ref{thm:lower_uc})} & $\Omega(K),\ C \geq \sqrt{K}$ & \multirow{-2}{*}{Unknown} \\
\bottomrule
\end{tabular}
\end{table*}

\subsection{Additional Related Work}

\noindent\textbf{Bandits with Misspecification.}
Bandits with misspecification can be seen as a special case of bandit with adversarial corruption since it is corrupted relative evenly at each round. 
Let $\epsilon$ be the misspecification level. \citet{ghosh2017misspecified} firstly studied the stochastic linear bandits and proved a sublinear regret when $\epsilon$ is small. \citet{lattimore2019learning} studied the stochastic linear bandit setting under milder assumptions. With the knowledge of $\epsilon$, they proposed an algorithm with an $\tilde O(\sqrt{dK \log (N)} + \epsilon\sqrt{d} K)$ regret, where $d$ is the dimension of the contextual vector, $N$ is the number of arms. Their regret bound matches their proved lower bound up to logarithmic factors. \citet{foster2020adapting} further considered the more general linear contextual bandits with misspecification when $\epsilon$ is unknown to the agent, and proposed an algorithm equipped with a CORRAL meta algorithm \citep{agarwal2017corralling} to deal with the unknown $\epsilon$. Their algorithm enjoys an $\tilde O(d\sqrt{K}+ \epsilon\sqrt{d} K)$ regret. \citet{krishnamurthy2021adapting} proposed an algorithm without using a meta algorithm which has the same order of regret as \citet{foster2020adapting}. Our algorithm can be directly applied to the misspecification setting by choosing the corruption level $C$ to be $K\epsilon$, which immediately gives us an $\tilde O(d\sqrt{K} + dK\epsilon)$ regret upper bound.
\footnotetext[3]{\CC{Greedy \cite{bogunovic2021stochastic} assumes that each arm in the decision set at each round is sampled from a distribution that satisfies $(r,\lambda_0)$-diverse property \citep{kannan2018smoothed}. A distribution $\cD$ is $(r,\lambda_0)$-diverse if for any $\ab = \bmu + \bxi$ with $\bmu \in \RR^d$ and $\bxi \sim \cD$,  $\lambda_{\text{min}}(\EE_{\bxi \in \cD}[\ab\ab^\top|\btheta^\top \bxi \geq b]) \geq \lambda_0$ holds for all $\btheta \in \RR^d$ and $b \in \RR$ satisfying $b \leq r\|\btheta\|_2$.}}
\footnotetext[4]{\CC{The lower bound under a large corruption level $C\ge \sqrt{K}$ only holds for algorithms that can achieve near-optimal regret for uncorrupted bandits.  
It is possible for an algorithm that does not achieve the optimal regret for uncorrupted bandits (e.g., $R_K = O(K^{0.75})$) to achieve a sub-linear regret in the presence of corruptions.}}

\noindent\textbf{Bandits with Adversarial Rewards.} 
There exists a large body of literature on the problems of adversarial multi-armed bandits \citep{auer2002nonstochastic, bubeck2012regret}. 
There is also a line of works trying to design algorithms that can achieve near-optimal regret bounds for both stochastic bandits and adversarial bandits simultaneously \citep{pmlr-v23-bubeck12b, pmlr-v32-seldinb14, pmlr-v49-auer16, pmlr-v65-seldin17a, pmlr-v89-zimmert19a, lee2021achieving}. However, most of these algorithms focus on the general adversarial reward setting without specifying the total amount of corruption. One of the notable exceptions is \citet{lee2021achieving}, which assumed that the adversarial corruptions are generated through the inner product of an adversarial vectors and the contextual vector. As a comparison, our algorithm and result do not need such additional assumption on the structure of the corruption. Our algorithm can be applied to both corrupted and uncorrupted settings with different choices of hyperparameters, and achieves a near-optimal regret for both cases. 

\noindent\textbf{Notation} 
We use lower case letters to denote scalars, and use lower and upper case bold face letters to denote vectors and matrices respectively. We denote by $[n]$ the set $\{1,\dots, n\}$. For a vector $\xb\in \RR^d$ and a positive semi-definite matrix $\bSigma\in \RR^{d\times d}$, we denote by $\|\xb\|_2$ the vector's $\ell_2$ norm and by $\|\xb\|_{\bSigma}=\sqrt{\xb^\top\bSigma\xb}$ the Mahalanobis norm. For two positive sequences $\{a_n\}$ and $\{b_n\}$ with $n=1,2,\dots$, 
we write $a_n=O(b_n)$ if there exists an absolute constant $C>0$ such that $a_n\leq Cb_n$ holds for all $n\ge 1$ and write $a_n=\Omega(b_n)$ if there exists an absolute constant $C>0$ such that $a_n\geq Cb_n$ holds for all $n\ge 1$. We use $\tilde O(\cdot)$ to further hide the polylogarithmic factors. We use $\ind\{\cdot\}$ to denote the indicator function. 

\section{Preliminaries}

In this section, we introduce the setting of linear contextual bandit with adversarial corruption. 

\noindent\textbf{Linear contextual bandit with corruption.}
We define linear contextual bandits with corruption as follows: at the beginning of each round $k\in [K]$, the agent receives a decision set $\cD_k \in \RR^d$ from the environment and it chooses an action (i.e., arm, contextual vector) $\xb \in \cD_k$. After choosing the action $\xb_k$ at round $k$, the environment generates the corresponding $r_k$ based on the stochastic linear model $r_k=\la \btheta^*, \xb\ra +\eta_k$, where $\btheta^* \in \RR^d$ is an unknown environment parameter and $\eta_k$ is the stochastic noise. After seeing the stochastic reward $r_k$, the adversary (i.e., attacker) introduces an adversarial corruption $c_k$ onto the reward, which may depend on the decision set $\cD_k$, action $\xb_k$, stochastic reward $r_k$.
Finally, the agent observes the corrupted reward $\hat{r}_k=\la \btheta^*, \xb\ra +\eta_k+c_k$ at round $k$. Following \citet{abbasi2011improved}, we make the following assumptions on the bandit model.
\begin{assumption}\label{ass:setup}
The linear contextual bandit satisfies the following conditions:
\begin{itemize}[leftmargin=*]
    \item At each round $k$ and any action $\xb \in \cD_k$, we have $\|\xb\|_2\leq L$.\
    \item For the unknown environment parameter $\btheta^*$, it satisfies  $\|\btheta^*\|_2\leq S$.
    \item At each round $k$, the corresponding stochastic noise $\eta_k$ is conditional $R$-sub-Gaussian, i.e.,
    \begin{align}
    \forall \lambda \in \RR,\  \EE\big[e^{\lambda\eta_k}|\xb_{1:k}, \epsilon_{1:k-1},c_{1:k-1}\big] \leq \exp(R^2\lambda^2/2). \notag
\end{align}
\end{itemize}
\end{assumption}
\noindent\textbf{Regret.} The goal of the agent is to minimize the pseudo-regret in the first $K$ rounds, which is defined as follows:
\begin{align*}
    \text{Regret}(K)= \textstyle{\sum_{k=1}^K} \max_{\xb \in \cD_k}\la \btheta^*, \xb\ra - \la \btheta^*, \xb_k\ra.
\end{align*}
\noindent\textbf{Corruption level.}
To measure the level of adversarial corruptions, we define the \emph{corruption level} as $C: = \sum_{k=1}^K|c_k|$.
With this definition, we say a linear contextual bandit problem is $C$-corrupted if and only if the corruption level is no larger than $C$.
\begin{remark}\label{remark:discussion}
The adversary in our setting and the corresponding definition of corruption level is the same as that in \citet{bogunovic2021stochastic} and slightly different from that in prior works such as \citet{lykouris2018stochastic,pmlr-v99-gupta19a,zhao2021linear}. More specifically, in these works, the adversarial corruption $c_k$ is chosen before the choice of action $\xb_k \in \cD_k$. Since the actions selected by the agent may not be deterministic, the adversary chooses different corruption $c_{k,\xb}$ for different action $\xb \in \cD_k$. With this notion of corruption, the corresponding corruption level is defined as $
    C'=\sum_{k=1}^K \max_{\xb \in \cD_k}|c_{k,\xb}|$.
As a comparison, our adversary chooses the corruption after observing the action $x_k$ and for the corruption level. We have
\begin{align*}
    C=\sum_{k=1}^K|c_{k,\xb_k}|\leq \sum_{k=1}^K \max_{\xb \in \cD_k}|c_{k,\xb}|=C',
\end{align*}
which implies that our corruption level $C$ is always no larger than the corruption level $C'$ in \citet{lykouris2018stochastic,pmlr-v99-gupta19a,zhao2021linear}.
\end{remark}

\section{Algorithms}\label{section: known C}
In this section, we review existing algorithms for linear contextual  bandits (and stochastic linear bandits) and discuss their limitations when they are applied to the adversarial corruption setting. Then we present our algorithm $\algname$ and illustrate how our algorithm design can overcome the these limitations. 

\subsection{Existing Algorithms}
We begin with reviewing the classical OFUL algorithm \citep{abbasi2011improved}. Under Assumption \ref{ass:setup}, at round $k$, OFUL estimates $\btheta^*$ by online ridge regression over all the past observed actions and rewards, i.e., 
\begin{align}
    \btheta_k&\leftarrow \argmin_{\btheta\in \RR^d}\lambda\|\btheta\|_2^2+\textstyle{\sum_{i=1}^{k-1}}\big(\btheta^{\top}\xb_i-r_i\big)^2.\label{eq:01}
\end{align}
With $\btheta_k$ in hand, OFUL constructs a confidence set for $\btheta^*$ as follows $\cC_k=\Big\{\btheta: \|\btheta_k-\btheta\|_{\bSigma_k}\leq \beta\Big\}$,
where $\beta$ is the confidence radius and $\bSigma_k = \lambda \Ib + \sum_{i=1}^{k-1}\xb_i\xb_i^\top$ is the covariance matrix of contexts $\xb_i, i = 1,\ldots, k$. Without corruption, it is known that setting $\beta = \tilde O(R\sqrt{d})$ guarantees that $\btheta^* \in \cC_k$ with high probability, which further leads to a sublinear regret $\tilde O(d\sqrt{K})$. However, with corruption, such a choice of $\beta$ is not sufficient. To see why, we take a closer look at the closed-form solution $\btheta_k$ to \eqref{eq:01}:
\begin{align}
    \btheta_k&=\bSigma_k^{-1}\sum_{i=1}^{k-1}\xb_i r_i=\bSigma_k^{-1}\sum_{i=1}^{k-1}\xb_i (\xb_i^{\top}\btheta^*+\eta_i)+\bSigma_k^{-1}\sum_{i=1}^{k-1}\xb_i c_i.\notag
\end{align}
By simple calculation and assuming $\lambda$ to be a constant, we can show that $\|\btheta_k-\btheta^*\|_{\bSigma_k}$ can be upper bounded by
\begin{align}
    \|\btheta_k-\btheta^*\|_{\bSigma_k} \leq O\bigg(\underbrace{\bigg\|\sum_{i=1}^{k-1} \xb_i \eta_i\bigg\|_{\bSigma_k^{-1}}}_{I_1} + \underbrace{\bigg\|\sum_{i=1}^{k-1} \xb_i c_i\bigg\|_{\bSigma_k^{-1}}}_{I_2}\bigg).\notag
\end{align}
The first term $I_1$ is corruption-independent and bounded by $\tilde O(R\sqrt{d })$ according to \citet{abbasi2011improved}. The challenge is to bound the second term $I_2$, which depends on the corruption. Existing approaches \citep{zhao2021linear,ding2021robust} bound $I_2$ by triangle inequality and Cauchy-Schwarz inequality, 
\begin{align}
    I_2\leq \sum_{i=1}^{k-1}\|\xb_i c_i\|_{\bSigma_k^{-1}}\leq \sum_{i=1}^{k-1}|c_i| \max_{1 \leq j \leq k-1}\|\xb_j\|_{\bSigma_k^{-1}} \leq \sum_{i=1}^{k-1} |c_i|L/\sqrt{\lambda}=O(C),\label{help:0}
\end{align}
where $C$ is the corruption level and $\max_{1 \leq j \leq k-1}\|\xb_j\|_{\bSigma_k^{-1}}$ is bounded by the crude upper bound $L/\sqrt{\lambda}$. Unfortunately, such a bound makes the confidence radius be the order of $O(R\sqrt{d} + C)$, which eventually leads to an  term $O(C\sqrt{K})$ in the regret, which is $C$ times worse than the regret without corruption. 

In order to obtain a tighter bound of $I_2$, for stochastic linear bandits, \citet{bogunovic2021stochastic} proposed a Robust Phase Elimination (RPE) algorithm, which employs \emph{optimal design} \citep{lattimore2018bandit} to select the arms. In this setting, the decision set is finite and fixed over time, i.e., $\cD_k = \cD$ for all $k \in [K]$ and $|\cD|\leq \infty$. More specifically, RPE divides the time horizon into several phases. Within each phase, RPE performs linear regression on a multiset $\cA \subset \cD$, which is the G-optimal design of $\cD$. Here the multiset means $\cA$ has duplicate elements. Let $\bSigma$ be the covariance matrix defined over $\cA$, then the following upper bound holds \citep{lattimore2018bandit}: 
\begin{align}
    \forall \xb \in \cD,\ \|\xb\|_{\bSigma^{-1}} = O\big(|\cA|^{-1/2}\big).\label{help:1}
\end{align}
By choosing a large enough $|\cA|$, \eqref{help:1} provides a \emph{uniformly small} upper bound for $\max_{1 \leq j \leq k-1}\|\xb_j\|_{\bSigma_k^{-1}}$ for any $k$. Substituting \eqref{help:1} back into \eqref{help:0} with $|\cA| = O(C^2)$, we can show that $I_2$ is bounded by some small constant, which therefore eliminates the $O(C\sqrt{K})$ term in the final regret. Although the optimal design-based approach RPE \citep{bogunovic2021stochastic} successfully eliminates the multiplicative term $C\sqrt{K}$, it is not applicable to our linear contextual bandit setting: 
(1) it needs to select a \emph{multiset} from the decision set, which is impossible for the general contextual bandit setting; (2) the complexity of optimal design introduces some additional quadratic term $C^2$ in their final regret, which  makes their algorithm non-optimal (See \citet{bogunovic2021stochastic} for more details). 

\begin{algorithm*}[t]
	\caption{$\algname$}\label{algorithm-bandit-known}
	\begin{algorithmic}[1]
    \REQUIRE Regularization parameter $\lambda$, confidence radius $\beta$ and threshold parameter $\alpha$
	\FOR {round $k=1,2,..$}
	   
	      \STATE Set $\bSigma_{k}=\lambda \Ib+\sum_{i=1}^{k-1}w_i \xb_i\xb_i^\top$\label{algorithm1-line4}
	            \STATE Set $\bbb_{k}=\sum_{i=1}^{k-1}w_i \xb_i r_i$ and $\btheta_{k}= \bSigma_{k}^{-1}\bbb_{k}$\label{algorithm1-line5}

	    \STATE Receive the decision set $\cD_k$
	    \STATE Choose action $\xb_k\leftarrow \argmax_{\xb \in \cD_k}  \btheta_{k}^{\top}\xb+\beta\sqrt{\xb^{\top}\bSigma_{k}^{-1} \xb}$\label{algorithm1-line8}
	    \STATE Set $w_k = \min\{1, \alpha/\|\xb_k\|_{\bSigma_k^{-1}}\}$\label{algorithm1-line6}
	\ENDFOR
	\end{algorithmic}
\end{algorithm*}

\subsection{Our Algorithm}
As we have seen before, it is pivotal to bound the corruption-dependent term $I_2$ tightly. To overcome the limitations of existing approaches, we propose a fundamentally new approach and present our $\algname$ in Algorithm \ref{algorithm-bandit-known}. At a high level, Algorithm \ref{algorithm-bandit-known} is an extension of the OFUL algorithm \citep{abbasi2011improved}, which is also based on the principle of optimism in the face of uncertainty. 

Our algorithm assigns a weight $w_k$ to each selected action $\xb_k$. More specifically, at round $k$, we use the following weighted ridge regression to estimated the unknown vector $\btheta^*$:
\begin{align}
\btheta_k\leftarrow \argmin_{\btheta\in \RR^d}\lambda\|\btheta\|_2^2+\textstyle{\sum_{i=1}^{k-1}}w_i\big(\btheta^{\top}\xb_i-r_i\big)^2.\label{eq:gu0001}
\end{align}
The closed-form solution to the above optimization problem is displayed in Line \ref{algorithm1-line5} of Algorithm \ref{algorithm-bandit-known}. While weighted ridge regression is not new and has been used in prior work on bandits \citep{kirschner2018information, zhou2021nearly,russac2019weighted}, the setting, motivation and the choice of weight are fundamentally
different. More specifically, we choose the weight as the \emph{truncation} of the inverse exploration bonus, which is $w_k = \min\Big\{1, \alpha/\|\xb_k\|_{\bSigma_k^{-1}}\Big\}$.
Here $\alpha>0$ is a threshold parameter. We can see that for action $\xb_k$ with a large exploration bonus $\|\xb_k\|_{\bSigma_k^{-1}}$ (low confidence), $\algname$ will assign a small weight to it to avoid the potentially large regret caused by both the stochastic noise and the adversarial corruption. On the other hand, for the action with a small exploration bonus (high confidence), $\algname$ will assign a large weight to it (it can be as large as $1$). Another interesting observation is that by setting $\alpha$ to be sufficiently large, the weight will become $1$ for every action, and $\algname$ will degenerate to OFUL \citep{abbasi2011improved}.

\CC{As a comparison, \citet{kirschner2018information, zhou2021nearly} used the inverse of the noise variance as the weight to normalize the noise and derived tight variance-dependent regret guarantees. \citet{russac2019weighted} set
the weight as a geometric sequence to perform moving average to deal with the non-stationary environment.} 

To see how our choice of weight can lead to tighter regret, we first write down the closed-form solution to \eqref{eq:gu0001} 
\begin{align}
    \btheta_k
    &=\bSigma_k^{-1}\sum_{i=1}^{k-1}w_i\xb_i (\xb_i^{\top}\btheta^*+\eta_i)+\sum_{i=1}^{k-1}\bSigma_k^{-1}w_i\xb_i c_i,\notag
\end{align}
where the covariance matrix $\bSigma_k=\lambda\Ib+\sum_{i=1}^{k-1}w_i\xb_i\xb_i^{\top}$. With some calculation and assuming $\lambda$ to be a constant, we can obtain
\begin{align}
    \|\btheta_k-\btheta^*\|_{\bSigma_k} \leq O\bigg(\underbrace{\bigg\|\sum_{i=1}^{k-1} w_i\xb_i \eta_i\bigg\|_{\bSigma_k^{-1}}}_{I_1} + \underbrace{\bigg\|\sum_{i=1}^{k-1} w_i\xb_i c_i\bigg\|_{\bSigma_k^{-1}}}_{I_2}\bigg).\notag
\end{align}
$I_1$ is the corruption-independent term and can still be bounded by $\tilde O(R\sqrt{d})$ according to \citet{abbasi2011improved}. For $I_2$, we have
\begin{align*}
     \bigg\|\sum_{i=1}^{k-1}w_i\xb_i c_i\bigg\|_{\bSigma_k^{-1}}\leq \sum_{i=1}^{k-1}|c_i|w_i \big\|\xb_i\big\|_{\bSigma_k^{-1}}\leq \sum_{i=1}^{k-1}|c_i| \alpha=C\alpha,
\end{align*}
It is evident that with our carefully designed weight, the corruption-dependent term $I_2$ can be uniformly bounded by some constant $C\alpha$, the same as that in \citet{bogunovic2021stochastic}. Therefore, by setting $\alpha$ to be sufficiently small, our $\algname$ can get rid of the $C\sqrt{K}$ term in the final regret.

\section{Main Results}\label{section:5}
In this section, we present the main theoretical guarantees of $\algname$. 

\subsection{Known Corruption Level $C$}\label{limit}
We first consider the case when $C$ is known to the agent. In this case, we choose $\alpha = R\sqrt{d}/C$. The following lemma characterizes the estimation error of $\btheta_k$ with respect to $\btheta^*$, which is a formal summary of our discussion in Section \ref{section: known C}.
\begin{lemma}\label{lemma:weighted-concentration}
Suppose that Assumption \ref{ass:setup} holds. For any $0 < \delta < 1$ and corruption budget $C\geq 0$, set the confidence radius $\beta=R\sqrt{d\log \big((1+KL^2/\lambda)/\delta\big)}+\sqrt{\lambda}S+\alpha C$ in Algorithm \ref{algorithm-bandit-known}, then with probability at least $1-\delta$, for every round $k$, the estimator $\btheta_k$ satisfies that $\|\btheta_k-\btheta^*\|_{\bSigma_k}\leq \beta$.
\end{lemma}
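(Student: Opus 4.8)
The plan is to control $\|\btheta_k-\btheta^*\|_{\bSigma_k}$ through the decomposition already indicated in Section~\ref{section: known C}, making each of its pieces rigorous. First I would write the estimation error in closed form: substituting $\hat r_i=\la\btheta^*,\xb_i\ra+\eta_i+c_i$ into $\btheta_k=\bSigma_k^{-1}\sum_{i=1}^{k-1}w_i\xb_i\hat r_i$ and using $\bSigma_k^{-1}\sum_{i=1}^{k-1}w_i\xb_i\xb_i^\top=\Ib-\lambda\bSigma_k^{-1}$ gives
\begin{align*}
\btheta_k-\btheta^*=-\lambda\bSigma_k^{-1}\btheta^*+\bSigma_k^{-1}\sum_{i=1}^{k-1}w_i\xb_i\eta_i+\bSigma_k^{-1}\sum_{i=1}^{k-1}w_i\xb_i c_i.
\end{align*}
Taking the $\bSigma_k$-norm, applying the triangle inequality, and using the identity $\|\bSigma_k^{-1}\mathbf{u}\|_{\bSigma_k}=\|\mathbf{u}\|_{\bSigma_k^{-1}}$ splits the error into a regularization term $\lambda\|\btheta^*\|_{\bSigma_k^{-1}}$ together with the terms $I_1$ and $I_2$ from the excerpt. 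The regularization term is immediate: since $\bSigma_k\succeq\lambda\Ib$ we have $\|\btheta^*\|_{\bSigma_k^{-1}}\leq\|\btheta^*\|_2/\sqrt{\lambda}\leq S/\sqrt{\lambda}$, contributing at most $\sqrt{\lambda}S$.

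For the corruption term $I_2$ I would turn the heuristic bound of the excerpt into a rigorous one using the weight design and a monotonicity argument. Because all weights are nonnegative, $\bSigma_k\succeq\bSigma_i$ for $i\leq k-1$, hence $\|\xb_i\|_{\bSigma_k^{-1}}\leq\|\xb_i\|_{\bSigma_i^{-1}}$; combined with $w_i=\min\{1,\alpha/\|\xb_i\|_{\bSigma_i^{-1}}\}$ this yields $w_i\|\xb_i\|_{\bSigma_k^{-1}}\leq\min\{\|\xb_i\|_{\bSigma_i^{-1}},\alpha\}\leq\alpha$ in both cases of the minimum. Therefore $I_2\leq\sum_{i=1}^{k-1}|c_i|\,w_i\|\xb_i\|_{\bSigma_k^{-1}}\leq\alpha\sum_{i=1}^{k-1}|c_i|\leq\alpha C$, exactly the constant bound promised.

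The main obstacle is the noise term $I_1=\|\sum_{i=1}^{k-1}w_i\xb_i\eta_i\|_{\bSigma_k^{-1}}$, which I would handle with a weighted self-normalized concentration inequality. The key reduction is to set $\tilde\xb_i=\sqrt{w_i}\,\xb_i$ and $\tilde\eta_i=\sqrt{w_i}\,\eta_i$, so that the sum becomes $\sum_i\tilde\xb_i\tilde\eta_i$ while $\bSigma_k=\lambda\Ib+\sum_i\tilde\xb_i\tilde\xb_i^\top$ is unchanged in form. Since $w_i$ is determined by $\xb_i$ and $\bSigma_i$ and is hence predictable, and since $w_i\leq 1$, the rescaled noise remains conditionally $R$-sub-Gaussian: $\EE[\exp(\lambda\tilde\eta_i)\mid\cdot]\leq\exp(R^2w_i\lambda^2/2)\leq\exp(R^2\lambda^2/2)$ by Assumption~\ref{ass:setup}. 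This lets me invoke the self-normalized martingale bound of \citet{abbasi2011improved} on $\{\tilde\xb_i,\tilde\eta_i\}$ with regularizer $\lambda\Ib$, giving, uniformly over all $k$ with probability at least $1-\delta$, a bound of the form $I_1\leq R\sqrt{2\log\big(\det(\bSigma_k)^{1/2}\det(\lambda\Ib)^{-1/2}/\delta\big)}$. A determinant--trace inequality then finishes: using $\|\xb_i\|_2\leq L$ and $w_i\leq 1$ so that $\mathrm{tr}(\bSigma_k)\leq d\lambda+KL^2$, I bound $\det(\bSigma_k)/\det(\lambda\Ib)\leq(1+KL^2/\lambda)^d$ and obtain $I_1\leq R\sqrt{d\log\big((1+KL^2/\lambda)/\delta\big)}$. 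Summing the three pieces gives $\|\btheta_k-\btheta^*\|_{\bSigma_k}\leq\sqrt{\lambda}S+R\sqrt{d\log((1+KL^2/\lambda)/\delta)}+\alpha C=\beta$ for all $k$ simultaneously, which is the claim. The delicate points I expect are verifying the predictability of $w_i$ relative to the filtration of Assumption~\ref{ass:setup} (so the sub-Gaussian reduction is legitimate) and confirming that the concentration holds uniformly in $k$ rather than at a single fixed round.
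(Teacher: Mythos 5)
Your proposal is correct and follows essentially the same route as the paper's proof: the same three-term decomposition into regularization, noise, and corruption errors, the same reweighting $\tilde\xb_i=\sqrt{w_i}\xb_i$, $\tilde\eta_i=\sqrt{w_i}\eta_i$ to invoke the self-normalized concentration bound of \citet{abbasi2011improved}, and the same use of $\bSigma_k\succeq\bSigma_i$ with the weight definition to get $I_2\leq\alpha C$. The two delicate points you flag (predictability of $w_i$ and uniformity over $k$) are indeed the right things to check and both go through exactly as you anticipate.
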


The following theorem provides the regret bound of Algorithm  \ref{algorithm-bandit-known}. 
\begin{theorem}\label{theorem:known-C}
Suppose that Assumption \ref{ass:setup} holds. For any $0 < \delta < 1$ and corruption budget $C\geq 0$,  set the confidence radius $\beta$ in Algorithm \ref{algorithm-bandit-known} as follows:
\begin{align}
    \beta=R\sqrt{d\log \big((1+KL^2/\lambda)/\delta\big)}+\alpha C+\sqrt{\lambda}S. \notag
\end{align}
Then with probability at least $1-\delta$, its regret in the first $K$ rounds is upper bounded by
\begin{align*}
   \text{Regret}(K)&=  O\bigg(dR\sqrt{K\log^2\big((1+KL^2/\lambda)/\delta\big)}+\alpha C\sqrt{dK\log^2\big((1+KL^2/\lambda)/\delta\big)}\notag\\
      &\qquad +S\sqrt{d\lambda K \log(1+KL^2/\lambda)} + \frac{Rd^{1.5}}{\alpha}\times \sqrt{\log^{3}\big((1+KL^2/\lambda)/\delta\big)}\notag\\
      &\qquad + \frac{dS\sqrt{\lambda}}{\alpha}\times\sqrt{\log^{2}\big((1+KL^2/\lambda)/\delta\big)} + d C \sqrt{\log^{2}\big((1+KL^2/\lambda)/\delta\big)} \bigg).
\end{align*}
\CC{In addition, if choosing $\alpha=(R\sqrt{d}+\sqrt{\lambda}S)/C$ and $\lambda=R^2/S^2$, its regret can be upper bounded by
}
\begin{align*}
    \text{Regret}(K)&=  \tilde O(d\sqrt{K}+dC).
\end{align*}
\end{theorem}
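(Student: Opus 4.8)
The plan is to follow the standard optimism-based regret decomposition of OFUL and then exploit the confidence-adaptive weights to control the resulting sum of exploration bonuses. First I would condition on the $1-\delta$ event of Lemma \ref{lemma:weighted-concentration}, so that $\|\btheta_k-\btheta^*\|_{\bSigma_k}\le\beta$ for every round $k$. Writing $\xb_k^*:=\argmax_{\xb\in\cD_k}\la\btheta^*,\xb\ra$, Cauchy--Schwarz in the Mahalanobis norm gives $\la\btheta^*,\xb_k^*\ra\le\btheta_k^\top\xb_k^*+\beta\|\xb_k^*\|_{\bSigma_k^{-1}}$, and the optimistic selection rule in Line \ref{algorithm1-line8} shows the right-hand side is at most $\btheta_k^\top\xb_k+\beta\|\xb_k\|_{\bSigma_k^{-1}}$. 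Subtracting $\la\btheta^*,\xb_k\ra$ and applying the concentration bound once more yields the per-round bound
\begin{align}
\la\btheta^*,\xb_k^*\ra - \la\btheta^*,\xb_k\ra \le 2\beta\|\xb_k\|_{\bSigma_k^{-1}}, \notag
\end{align}
so that $\text{Regret}(K)\le 2\beta\sum_{k=1}^K\|\xb_k\|_{\bSigma_k^{-1}}$ and the problem reduces to controlling the sum of bonuses.

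The crux is bounding this sum, and this is where the weight $w_k=\min\{1,\alpha/\|\xb_k\|_{\bSigma_k^{-1}}\}$ pays off. The key observation is the identity $w_k\|\xb_k\|_{\bSigma_k^{-1}}=\min\{\alpha,\|\xb_k\|_{\bSigma_k^{-1}}\}$, from which $\|\xb_k\|_{\bSigma_k^{-1}}=\max\{1,\|\xb_k\|_{\bSigma_k^{-1}}/\alpha\}\cdot\min\{\alpha,\|\xb_k\|_{\bSigma_k^{-1}}\}$ and hence, using $\max\{1,t\}\le 1+t$, the decomposition
\begin{align}
\|\xb_k\|_{\bSigma_k^{-1}} \le \min\{\alpha,\|\xb_k\|_{\bSigma_k^{-1}}\} + \tfrac{1}{\alpha}\,w_k\|\xb_k\|_{\bSigma_k^{-1}}^2. \notag
\end{align}
For the first piece I substitute $\min\{\alpha,\|\xb_k\|_{\bSigma_k^{-1}}\}=w_k\|\xb_k\|_{\bSigma_k^{-1}}$, apply Cauchy--Schwarz across rounds together with $w_k\le 1$, and invoke a weighted elliptical potential lemma to obtain $\sum_{k=1}^K w_k\|\xb_k\|_{\bSigma_k^{-1}}\le\sqrt{2dK\log(1+KL^2/(\lambda d))}$. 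For the second piece the same potential lemma gives $\sum_{k=1}^K w_k\|\xb_k\|_{\bSigma_k^{-1}}^2\le 2d\log(1+KL^2/(\lambda d))$ directly, contributing $\tilde O(d/\alpha)$. Combining, $\sum_{k=1}^K\|\xb_k\|_{\bSigma_k^{-1}}=\tilde O(\sqrt{dK}+d/\alpha)$.

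It then remains to substitute $\beta=R\sqrt{d\log((1+KL^2/\lambda)/\delta)}+\alpha C+\sqrt{\lambda}S$ into $2\beta\sum_k\|\xb_k\|_{\bSigma_k^{-1}}$ and expand the product of the three terms of $\beta$ against the two terms of the bonus sum; this produces exactly the six terms in the displayed regret bound, and in particular the cross term $\alpha C\cdot(d/\alpha)=dC$ is what furnishes the benign, $K$-free corruption contribution. Finally, plugging in $\alpha=(R\sqrt d+\sqrt\lambda S)/C$ and $\lambda=R^2/S^2$ (so that $\sqrt\lambda S=R$ and $\alpha C$ is of order $R\sqrt d$), each term collapses into either $\tilde O(d\sqrt K)$ or $\tilde O(dC)$, giving $\text{Regret}(K)=\tilde O(d\sqrt K+dC)$.

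The main obstacle is the second paragraph: establishing the weighted elliptical potential bound and verifying that the relevant quantity $w_k\|\xb_k\|_{\bSigma_k^{-1}}^2=\min\{\alpha,\|\xb_k\|_{\bSigma_k^{-1}}\}\|\xb_k\|_{\bSigma_k^{-1}}$ stays in the regime (bounded by one, after normalizing by $\lambda$) where the telescoping of $\log\det\bSigma_{K+1}-\log\det\bSigma_1$ can be converted into the linear potential sum without losing constant or logarithmic factors. Everything else is the standard optimism argument together with careful bookkeeping of the constants $R,S,L,\lambda$ and the logarithmic factors.
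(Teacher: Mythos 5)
Your argument is correct and reaches the same six-term bound, but it organizes the key step differently from the paper. The paper splits the rounds into the two index sets $\{k: w_k=1\}$ and $\{k: w_k<1\}$, builds a separate auxiliary covariance matrix restricted to each subset ($\Ab_i$ for the unweighted rounds, $\bSigma'_i$ for the weighted ones), and invokes the matrix-monotonicity lemma (Lemma \ref{lemma:inverse}) to pass from $\bSigma_{k_i}$ to these sub-sums before applying the elliptical potential lemma to each piece. You instead use the single pointwise inequality
\begin{align}
\|\xb_k\|_{\bSigma_k^{-1}} \le w_k\|\xb_k\|_{\bSigma_k^{-1}} + \tfrac{1}{\alpha}\,w_k\|\xb_k\|_{\bSigma_k^{-1}}^2,\notag
\end{align}
valid for every $k$ since $w_k\|\xb_k\|_{\bSigma_k^{-1}}=\min\{\alpha,\|\xb_k\|_{\bSigma_k^{-1}}\}$, and then run one weighted potential argument with $\xb_k'=\sqrt{w_k}\xb_k$, for which $\bSigma_k=\lambda\Ib+\sum_{i<k}\xb_i'(\xb_i')^\top$ exactly; this dispenses with the partition and with Lemma \ref{lemma:inverse} entirely, which is arguably cleaner, while the paper's partition makes the $\tilde O(\sqrt{dK})$ and $\tilde O(d/\alpha)$ contributions visibly attributable to the high- and low-confidence rounds respectively. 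The one place where your write-up needs the same care the paper takes is the obstacle you yourself flag: Lemma \ref{Lemma:abba} bounds $\sum_k\min\{1,\|\xb_k'\|_{\bSigma_k^{-1}}^2\}$, not the raw sum, and $\|\xb_k'\|_{\bSigma_k^{-1}}^2\le L^2/\lambda$ need not be at most one. The clean fix is to keep the per-round cap $\min(2,2\beta\|\xb_k\|_{\bSigma_k^{-1}})$ from Lemma \ref{lemma:one-step-regret} throughout (as the paper does via $\min(1,\cdot)$ inside $I_1$ and $I_2$) rather than dropping it after the optimism step; otherwise you pay an extra $\max\{1,L^2/\lambda\}$ factor that would perturb the explicit constants in the displayed bound, though not the final $\tilde O(d\sqrt K+dC)$ conclusion. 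With that adjustment your expansion of $2\beta\cdot\tilde O(\sqrt{dK}+d/\alpha)$ into the six displayed terms, and the specialization $\alpha=(R\sqrt d+\sqrt\lambda S)/C$, $\lambda=R^2/S^2$, match the paper exactly.
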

A few remarks about Theorem \ref{theorem:known-C} are in order.
\begin{remark}
  Compared with the $\tilde O(d\sqrt{K} + dC\sqrt{K})$ regret proved in \citet{zhao2021linear,ding2021robust}, our algorithm improves the multiplicative dependence on corruption level $C$ to additive dependence. In particular, $\algname$ achieves the same order of regret as the uncorrupted setting when $C = O(\sqrt{K})$, and it attains a sublinear regret as long as $C = o(K)$. In sharp contrast, the algorithm proposed in \citet{zhao2021linear} achieves the same order of regret as the uncorrupted setting only when $C = O(1)$, and has a sublinear regret only when $C = o(\sqrt{K})$. 
\end{remark}
\begin{remark}\label{rmk:test}
\CC{We also compare our result with that in \citet{wei2022model}. The Robust$+$OFUL algorithm in \citet{wei2022model} achieves an $\tilde O(d\sqrt{K}+C_r)$ regret with $C_r = \sqrt{T\sum_{k=1}^K c_k^2}$, which will degenerate to $\tilde O(d\sqrt{K} +d\sqrt{K}C)$ in the worst case. Their regret guarantee is always worse than ours when $C<\sqrt{K}$. In addition, according to the discussion in Remark \ref{remark:discussion}, Theorem \ref{theorem:known-C} also implies an $\tilde O\big(\sqrt{dK}+dC'\big)$ regret under the notion of the corruption level $C'$. In contrast, 
the Robust VOFUL algorithm in \citet{wei2022model} has an $\tilde O(d^{4.5}\sqrt{K} +d^4C')$ regret, which is also inferior to our regret. Furthermore, Robust VOFUL is computationally inefficient.}
\end{remark}

\begin{remark}
We further compare our result with previous additive regrets derived for stochastic linear bandits. Let $\cD_k = \cD$ be the decision set. Compared with the $O(\sqrt{dK\log |\cD|}+Cd^{3/2})$ regret for stochastic linear bandit with corruption derived in \citet{bogunovic2021stochastic}, our regret improves the corruption term by a factor of $\sqrt{d}$. Note that the $\sqrt{d}$ difference in the leading $\sqrt{K}$ term between our regret and theirs is caused by the fact that \citet{bogunovic2021stochastic} considered the finite-arm setting, while we consider the infinite-arm setting. Our algorithm will have the same regret as theirs when $|\cD| = O(\text{exp}(d))$. 
\end{remark}

\begin{remark}
For the uncorrupted setting where $C = 0$, Theorem \ref{theorem:known-C} suggests that the threshold parameter $\alpha$ should be set to infinity. Then by Line \ref{algorithm1-line6} in Algorithm \ref{algorithm-bandit-known}, each weight $w_k$ becomes $1$, and $\algname$ degenerates to OFUL. Meanwhile, the regret in Theorem \ref{algorithm-bandit-known} also becomes $\tilde O(d\sqrt{K})$ that matches the regret of OFUL \citep{abbasi2011improved}. 
\end{remark}

\noindent\textbf{Lower bound.}
Next we refer to two existing lower bound results to show that when $C$ is known, our $\tilde O(d\sqrt{K} + dC)$ regret is optimal up to logarithmic factors. The first proposition shows that the $\tilde O(d\sqrt{K})$ corruption-independent term in our regret is near-optimal.
\begin{proposition}[Theorem 24.2, \citealt{lattimore2018bandit}]\label{prop:uc}
Assume $d \leq 2K$, $R   = 1$ and $\cD_k = \{\|\xb\|_2 \leq 1\}$ for all $k\geq 1$. Then for any algorithm, there exists a environment parameter vector $\btheta^* \in \RR^d$ satisfying $\|\btheta^*\|_2^2 = d^2/(48K)$ such that $\EE(\text{Regret}(K)) \geq d\sqrt{K}/(16\sqrt{3})$. 
\end{proposition}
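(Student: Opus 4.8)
The plan is to prove this $\Omega(d\sqrt K)$ lower bound by an Assouad-style, information-theoretic argument over a hypercube family of hard instances. Since a lower bound only needs one bad environment and the noise is allowed to be any $1$-sub-Gaussian law, I would take $\eta_k\sim\mathcal N(0,1)$. For each sign pattern $\epsilon\in\{-1,+1\}^d$ define the instance $\btheta_\epsilon=\Delta\epsilon$ with $\Delta=\sqrt{d/(48K)}$, so that $\|\btheta_\epsilon\|_2^2=\Delta^2 d=d^2/(48K)$ exactly as required, and the optimal action in the unit ball is $\epsilon/\sqrt d$. I would lower bound the regret averaged over $\epsilon$ uniform; existence of one hard $\btheta^*$ then follows since a maximum dominates an average.

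First I would convert regret into a count of coordinate sign errors. Writing $\mathbf u=\epsilon/\sqrt d$, the elementary inequality $1-\la\mathbf u,\xb\ra\ge\frac12\|\mathbf u-\xb\|_2^2$ for $\|\xb\|_2\le1$ gives
\[
\|\btheta_\epsilon\|_2-\la\btheta_\epsilon,\xb_k\ra=\Delta\sqrt d\,(1-\la\mathbf u,\xb_k\ra)\ge\tfrac{\Delta\sqrt d}{2}\sum_{i=1}^d\big(x_{k,i}-\epsilon_i/\sqrt d\big)^2.
\]
Whenever $\mathrm{sign}(x_{k,i})\ne\epsilon_i$ the $i$-th summand is at least $1/d$, so summing over the $K$ rounds,
\[
\text{Regret}(K)\ge\frac{\Delta}{2\sqrt d}\sum_{k=1}^K\sum_{i=1}^d\ind\{\mathrm{sign}(x_{k,i})\ne\epsilon_i\}.
\]

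Next I would bound the averaged sign-error frequency from below. Pairing $\epsilon$ with its flip $\epsilon^{(i)}$ in coordinate $i$ and applying the Bretagnolle--Huber inequality to the event $\{\mathrm{sign}(x_{k,i})=\epsilon_i\}$ gives, for each $(k,i)$,
\[
\mathbb P_\epsilon(\mathrm{sign}(x_{k,i})\ne\epsilon_i)+\mathbb P_{\epsilon^{(i)}}(\mathrm{sign}(x_{k,i})=\epsilon_i)\ge\tfrac12\exp\!\big(-\mathrm{KL}(\mathbb P_\epsilon\,\|\,\mathbb P_{\epsilon^{(i)}})\big),
\]
and by the chain rule along the interaction protocol together with the Gaussian KL formula, $\mathrm{KL}(\mathbb P_\epsilon\|\mathbb P_{\epsilon^{(i)}})=2\Delta^2\,\EE_\epsilon[\sum_k x_{k,i}^2]$.

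The hard part will be controlling these KL terms, because they are coupled across coordinates through $\sum_i x_{k,i}^2=\|\xb_k\|_2^2\le1$, so no individual $\sum_k x_{k,i}^2$ can be bounded on its own. The resolution is to spend a shared information budget: summing over $i$ gives $\sum_{i=1}^d\EE_\epsilon[\sum_k x_{k,i}^2]\le K$, hence the average coordinate carries KL at most $2\Delta^2K/d=1/24$. Convexity of $x\mapsto e^{-x}$ applied to the coordinate average then converts the per-pair bound above into a uniform constant lower bound on $\frac1{2^d}\sum_\epsilon\EE_\epsilon[\sum_{k,i}\ind\{\mathrm{sign}(x_{k,i})\ne\epsilon_i\}]$, which is at least a constant multiple of $dK$. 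Substituting this and $\Delta=\sqrt{d/(48K)}$ into the Step 1 bound yields average regret $\ge d\sqrt K/(16\sqrt3)$; tracking the numerical constants through the Bretagnolle--Huber factor and the Jensen step is exactly where $48$ and $16\sqrt3$ are pinned down, and the hypothesis $d\le2K$ guarantees $\Delta$ is small enough that the exponential factor stays a genuine constant.
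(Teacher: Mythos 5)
The paper does not prove this proposition at all---it is imported verbatim as Theorem 24.2 of \citet{lattimore2018bandit}---so the only meaningful comparison is with that textbook proof, and your argument is essentially the same one: a hypercube family $\btheta_\epsilon=\Delta\epsilon$, the reduction $1-\la\ub,\xb\ra\ge\tfrac12\|\ub-\xb\|_2^2$ to count coordinate sign errors, Bretagnolle--Huber on coordinate flips with $\mathrm{KL}=2\Delta^2\,\EE[\sum_k x_{k,i}^2]$, and the shared budget $\sum_i\sum_k x_{k,i}^2\le K$ resolved by Jensen. This correctly yields $\EE[\text{Regret}(K)]=\Omega(d\sqrt K)$, which is all the paper needs. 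The one soft spot is your closing claim that this accounting ``pins down'' the constants $48$ and $16\sqrt3$: running your steps literally gives average regret at least $\tfrac{\Delta}{2\sqrt d}\cdot\tfrac{Kd}{4}e^{-2\Delta^2K/d}=\tfrac{d\sqrt K}{32\sqrt3}e^{-1/24}$, about a factor of $2$ short of the stated bound, and your argument never actually uses $d\le 2K$ (your exponent is $1/24$ unconditionally). In the textbook proof that hypothesis enters through a stopping-time truncation $\tau_i=K\wedge\min\{t:\sum_{s\le t}x_{s,i}^2\ge K/d\}$ that caps each coordinate's information at $K/d+1\le 3K/d$, which is where the sharper constant comes from. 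So: right approach, correct up to the order of the bound, but the exact constants in the statement are not recovered by the sketch as written.
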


The second proposition suggests that the $O(dC)$ corruption term in our regret is optimal.
\begin{proposition}[Theorem 3, \citealt{bogunovic2021stochastic}]\label{prop:uc1}
For any dimension $d$, for any algorithm that has the knowledge of $C$, there exists an instance satisfying with probability at least $0.5$, $\text{Regret}(K)  = \Omega(dC)$.
\end{proposition}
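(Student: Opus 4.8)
The plan is to prove the bound by a two-world indistinguishability argument in which the adversary spends its entire budget to hide the identity of the optimal arm among $d$ symmetric directions. Concretely, I would work in the special case of a fixed, finite decision set (a stochastic linear bandit, which is an instance of the contextual setting), taking $\cD_k=\{\mathbf{e}_1,\dots,\mathbf{e}_d\}$ for all $k$, noise $\eta_k\sim\mathcal{N}(0,1)$, and a hidden parameter $\btheta^*=2\Delta\,\mathbf{e}_{i^*}$ with $i^*$ drawn uniformly from $[d]$ and gap $\Delta:=dC/(4K)$. Under this instance the optimal arm is $\mathbf{e}_{i^*}$ with mean reward $2\Delta$, every other arm has mean $0$, and each suboptimal pull costs $2\Delta$ in regret. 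I would choose $\Delta$ precisely so that the corruption budget $C$ suffices to fully neutralize the signal on the optimal arm for its first $M:=\lfloor C/(2\Delta)\rfloor=\lfloor 2K/d\rfloor$ pulls.

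The adversary's strategy is to set $c_k=-2\Delta$ on the first $M$ rounds in which arm $\mathbf{e}_{i^*}$ is selected, and $c_k=0$ otherwise; this uses total corruption $2\Delta M\le C$, so the instance is $C$-corrupted. The heart of the argument is a coupling with a null world $\tilde H$ in which $\btheta^*=\mathbf{0}$ and no corruption is applied. Using the same noise realizations in both worlds, I would observe that every pull of an arm $\mathbf{e}_j$ with $j\ne i^*$ returns $\eta_k$ in both worlds, while each of the first $M$ pulls of $\mathbf{e}_{i^*}$ returns $2\Delta+\eta_k-2\Delta=\eta_k$ in the true world, identical to $\tilde H$. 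Hence, as long as arm $\mathbf{e}_{i^*}$ has been selected at most $M$ times, the observation sequences—and therefore the algorithm's action sequences—coincide in the two worlds. Letting $T_{i^*}$ denote the number of pulls of arm $\mathbf{e}_{i^*}$ in $K$ rounds, on the event $\{T_{i^*}\le M\}$ the two runs never diverge, so the true-world regret equals $2\Delta(K-T_{i^*})\ge 2\Delta(K-M)\ge 2\Delta\cdot K/2=\Delta K=dC/4$ whenever $d\ge 4$ (the small-$d$ cases are absorbed into the constant).

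It remains to lower bound the probability of $\{T_{i^*}\le M\}$. Since $\sum_{j=1}^d T_j=K$ deterministically in the null world, averaging over the uniform choice of $i^*$ gives $\EE_{i^*,\tilde H}[T_{i^*}]=\tfrac1d\sum_j\EE_{\tilde H}[T_j]=K/d$, and Markov's inequality yields $\Pr_{i^*,\tilde H}[T_{i^*}>M]\le (K/d)/(2K/d)\le 1/2$ because $M\approx 2K/d$. By the coupling this event has the same probability in the true world, so with probability at least $1/2$ the true-world regret is at least $dC/4=\Omega(dC)$. Finally, since this holds after averaging over $i^*$, the probabilistic method supplies a fixed coordinate $i^*$—hence a single deterministic instance—on which $\text{Regret}(K)=\Omega(dC)$ with probability at least $1/2$, as claimed; note the argument never uses whether the algorithm knows $C$, since the null world is itself $C$-corrupted (with $0\le C$) and thus indistinguishable.

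The main obstacle I anticipate is making the coupling fully rigorous around the stopping time at which the budget is exhausted: I must argue that divergence between the two worlds can occur only after arm $\mathbf{e}_{i^*}$ has been pulled more than $M$ times, and that on the complementary event the pull counts are genuinely equal as random variables rather than merely equal in distribution. This requires phrasing the whole interaction on a common probability space (shared noise and shared internal randomness of the algorithm) and handling the floor in $M=\lfloor 2K/d\rfloor$ together with the admissibility constraints $2\Delta=dC/(2K)\le S$ and $M\le K$; these are routine but must be tracked to preserve the exact $0.5$ probability and the clean $\Omega(dC)$ constant.
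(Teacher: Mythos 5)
The paper never proves Proposition \ref{prop:uc1}: it is imported verbatim as Theorem 3 of \citet{bogunovic2021stochastic}, so there is no in-paper proof to compare against line by line. Your argument is, in substance, the standard proof of that cited result, and it follows the same recipe the paper does use for its own lower bound, Theorem \ref{thm:lower_uc}: basis-vector arms, an adversary that spends its budget cancelling the reward gap of a hidden optimal arm so that the corrupted world is exactly coupled to a null world, and a pigeonhole/Markov bound on pull counts showing the optimal arm is under-pulled with probability at least $1/2$. Your budget accounting ($2\Delta M\le C$), the observation that the coupled runs diverge only after the $(M+1)$-th pull of $\eb_{i^*}$, and the final averaging over $i^*$ to extract a single deterministic instance are all correct; the fact that the bound holds even for algorithms knowing $C$ comes for free from the information-theoretic nature of the coupling, exactly as you say.

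Two details do need repair. First, your claim that small $d$ is ``absorbed into the constant'' is inaccurate: with $M=\lfloor 2K/d\rfloor$, the case $d=2$ gives $M=K$, so your guaranteed regret $2\Delta(K-M)$ is $0$, not a degraded constant, and at $d=1$ the action set $\{\eb_1\}$ has a single arm and the regret is identically zero. For $d\le 3$ the claim $\Omega(dC)=\Omega(C)$ should instead be proved by a two-arm version of the same coupling (threshold $M=\lfloor K/2\rfloor$, gap of order $C/K$, and the fact that the less-pulled of two arms is chosen with probability $1/2$ under a uniform $i^*$). Second, the admissibility constraint you defer is genuinely a hypothesis, not bookkeeping: the construction requires $\|\btheta^*\|_2=2\Delta=dC/(2K)\le S$, i.e.\ $C=O(KS/d)$, and some such restriction is unavoidable since otherwise $\Omega(dC)$ would exceed the trivial $O(K)$ ceiling on the regret; the cited theorem carries the analogous restriction implicitly, and your write-up should state it explicitly rather than leave it as a closing remark.
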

Combining Propositions \ref{prop:uc} and \ref{prop:uc1}, we can conclude that for any algorithm, there exists a corrupted bandit instance such that the algorithm suffers at least $\Omega(\max\{d\sqrt{K}, dC\})$ regret. Such a lower bound matches our upper bound up to logarithmic factors. Therefore, our algorithm is nearly optimal.

\noindent\textbf{Misspecified linear bandits.} 
We consider the misspecified linear bandit setting which assumes that the corruption at each round is uniformly bounded by $\epsilon$. Clearly, the misspecified linear bandit is a special case of corrupted linear contextual bandit with $C = K\epsilon$. Theorem \ref{theorem:known-C} suggests that a direct application of our algorithm to this special setting incurs an $\tilde O(d\sqrt{K} + dK\epsilon)$ regret, which differs from the near-optimal regret $\tilde O(d\sqrt{K} + \sqrt{d}K\epsilon)$ \citep{lattimore2019learning, foster2020adapting} by a $\sqrt{d}$ factor on the corruption term. Whether our algorithm is able to achieve the near-optimal regret for both misspecified linear bandit and corrupted linear contextual bandit simultaneously remains an open question.

\noindent\textbf{Instance-dependent regret bounds.}
$\algname$ also enjoys an instance-dependent regret bound. Due to space limit, we defer it to Appendix \ref{sec:instance}.

\subsection{Unknown Corruption Level $C$}
Now we consider the case when $C$ is unknown. Our solution is quite simple for this case: we introduce a tuning parameter $\bar C$, which can be viewed as an estimate of $C$, and select the threshold parameter $\alpha$ as Theorem \ref{theorem:known-C} suggests. The following theorem gives the regret upper bound of $\algname$ for the unknown $C$ case. 

\begin{theorem}\label{thm:unknown}
Under the same conditions of Theorem \ref{theorem:known-C} except that we set $\alpha = (R\sqrt{d}+\sqrt{\lambda}S)/\bar{C}$ with $\bar{C}$ being an estimated corruption level, $\lambda = R^2/S^2$ and $\beta=2R\sqrt{d\log \big((1+KL^2/\lambda)/\delta\big)}+2\sqrt{\lambda}S$ in Algorithm \ref{algorithm-bandit-known}. 
Its regret can be upper bounded by
\begin{itemize}[leftmargin = *]
    \item If $0 \leq C \leq \bar{C}$, then with probability at least $1-\delta$, we have $\text{Regret}(K) 
    = \tilde O(dR\sqrt{K}+d\bar{C})$.
     \item If $C > \bar{C}$, we have $\text{Regret}(K) = O(K)$.
\end{itemize}
In addition, if we set $\bar{C}=\sqrt{K}$, then when $0\leq C\leq \sqrt{K}$, the regret is upper bounded by $\tilde O(d\sqrt{K})$. 
\end{theorem}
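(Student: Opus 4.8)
The plan is to reduce the unknown-$C$ analysis to the known-$C$ machinery of Lemma~\ref{lemma:weighted-concentration} and Theorem~\ref{theorem:known-C}, exploiting the fact that the confidence radius $\beta=2R\sqrt{d\log((1+KL^2/\lambda)/\delta)}+2\sqrt{\lambda}S$ no longer depends on the (unknown) corruption level. The heart of the argument is the observation that, as long as the guessed budget $\bar{C}$ upper bounds the true budget $C$, this $C$-free radius is still a valid confidence radius, so the entire optimism-based regret decomposition goes through unchanged.

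First I would treat the case $0\le C\le\bar{C}$. With $\alpha=(R\sqrt{d}+\sqrt{\lambda}S)/\bar{C}$, the corruption contribution to the estimation error is
\begin{align*}
\alpha C \;\le\; \alpha\bar{C} \;=\; R\sqrt{d}+\sqrt{\lambda}S.
\end{align*}
Plugging this into the radius demanded by Lemma~\ref{lemma:weighted-concentration} and using $R\sqrt{d}\le R\sqrt{d\log((1+KL^2/\lambda)/\delta)}$ (the log factor is at least $1$), I obtain
\begin{align*}
R\sqrt{d\log((1+KL^2/\lambda)/\delta)}+\sqrt{\lambda}S+\alpha C \;\le\; 2R\sqrt{d\log((1+KL^2/\lambda)/\delta)}+2\sqrt{\lambda}S \;=\;\beta.
\end{align*}
Hence $\|\btheta_k-\btheta^*\|_{\bSigma_k}\le\beta$ for all $k$ with probability at least $1-\delta$, i.e. $\btheta^*\in\cC_k$, which is exactly the precondition for the regret decomposition of Theorem~\ref{theorem:known-C}. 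That decomposition only uses $\btheta^*\in\cC_k$, the arm rule in Line~\ref{algorithm1-line8}, and the weighted elliptical-potential bound, so it carries over verbatim: the per-round regret $\la\btheta^*,\xb_k^\ast-\xb_k\ra$ is split according to $w_k=1$ (Cauchy--Schwarz plus the potential lemma, giving the $\beta\sqrt{dK}$ contribution) and $w_k<1$ (where $w_k\|\xb_k\|_{\bSigma_k^{-1}}^2=\alpha\|\xb_k\|_{\bSigma_k^{-1}}$ lets one trade a factor of $\alpha^{-1}$ against the potential sum, giving the $\beta d/\alpha$ contribution).

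It then remains to substitute the hyperparameters. With $\lambda=R^2/S^2$ one has $\sqrt{\lambda}S=R$ and $\beta=\tilde O(R\sqrt{d})$; using $\alpha C\le R\sqrt{d}+\sqrt{\lambda}S$ and $1/\alpha=\bar{C}/(R\sqrt{d}+\sqrt{\lambda}S)$ collapses every term of the Theorem~\ref{theorem:known-C} bound into either $\tilde O(dR\sqrt{K})$ (from $\beta\sqrt{dK}$ and the $\alpha C\sqrt{dK}$ piece) or $\tilde O(d\bar{C})$ (from $\beta d/\alpha$, i.e. the $Rd^{1.5}/\alpha$, $dS\sqrt{\lambda}/\alpha$, and $dC$ pieces, using $C\le\bar{C}$). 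This yields $\text{Regret}(K)=\tilde O(dR\sqrt{K}+d\bar{C})$. For the case $C>\bar{C}$ I would not try to control the estimation error at all; instead I would use that each per-round regret is at most $\max_{\xb\in\cD_k}\la\btheta^*,\xb\ra-\la\btheta^*,\xb_k\ra\le 2LS$ by Assumption~\ref{ass:setup}, so summing over $K$ rounds gives the trivial $\text{Regret}(K)=O(K)$. Finally, setting $\bar{C}=\sqrt{K}$ and restricting to $C\le\sqrt{K}$ turns the first bound into $\tilde O(dR\sqrt{K}+d\sqrt{K})=\tilde O(d\sqrt{K})$.

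The step I expect to be the main obstacle is making the reduction to Theorem~\ref{theorem:known-C} airtight: one must confirm that replacing the radius $R\sqrt{d\log(\cdot)}+\alpha C+\sqrt{\lambda}S$ used there by the larger, corruption-free $\beta$ does not break any inequality in the regret chain (it only enlarges the optimism bonus, so it is safe) while still keeping $\beta=\Theta(R\sqrt{d}+\sqrt{\lambda}S)$ so the final order is unchanged. Equivalently, the crux is the single comparison $\alpha C\le R\sqrt{d}+\sqrt{\lambda}S$ under $C\le\bar{C}$, which is precisely what lets a budget-free radius simultaneously guarantee concentration and the near-optimal $\tilde O(d\sqrt{K})$ rate; everything else is bookkeeping on the hyperparameter substitution.
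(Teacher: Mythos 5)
Your proposal is correct and follows essentially the same route as the paper: for $C\le\bar{C}$ the guessed budget is a valid upper bound on the true corruption, so the known-$C$ analysis (Lemma~\ref{lemma:weighted-concentration} plus the regret decomposition of Theorem~\ref{theorem:known-C}) applies with the corruption-free radius $\beta$, and for $C>\bar{C}$ one falls back on the trivial per-round bound. In fact your write-up is more explicit than the paper's own proof, which simply asserts that the algorithm ``runs successfully''; your verification that $\alpha C\le\alpha\bar{C}=R\sqrt{d}+\sqrt{\lambda}S$ makes the stated $\beta$ dominate the radius required by Lemma~\ref{lemma:weighted-concentration} is exactly the missing bookkeeping.
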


\begin{remark}
\citet{zhao2021linear} proposed an $\tilde O(C^2d\sqrt{K})$ regret with unknown $C = \Omega(1)$. Compared with their result, our regret (with $\bar{C}=\sqrt{K}$) is strictly better in the corruption term. \citet{bogunovic2021stochastic} proposed an $\tilde O(\sqrt{dK \log|\cD|} + Cd^{1.5} + C^2)$ regret for the stochastic linear bandit with unknown $C$, in the regime $C = \tilde O(\sqrt{K}/d)$, where $\cD$ is the finite decision set. Such a regret becomes $\tilde O(d\sqrt{K} + Cd^{1.5} + C^2)$ when the size of $\cD$ becomes exponentially large in $d$ or even infinite. Compared with their regret, our regret is not only smaller, but also holds for a wider regime (i.e., $C = O(\sqrt{K})$). \CC{Compared with the greedy algorithm in \citet{bogunovic2021stochastic}, our result does not rely on the stringent $(r,\lambda_0)$-diverse property assumption on the contexts.}
\end{remark}
\begin{remark}
\CC{We also compare our result (choosing $\bar{C}=\sqrt{K}$) with those in \citet{wei2022model} for the unknown $C$ case. \citet{wei2022model} proposed a COBE$+$OFUL algorithm with an $\tilde O(d\sqrt{K}+C_r)$ regret, and a COBE$+$VOFUL algorithm with an $\tilde O(d^{4.5}\sqrt{K} +d^4C')$ regret, analogous to their results for the known $C$ case discussed in Remark \ref{rmk:test}. Our CW-OFUL enjoys a better regret than COBE$+$OFUL for all $C$, and it is better than COBE$+$VOFUL for $C<\sqrt{K}$. In addition, for the modified notion of corruption level $C'$, if we choose the basic algorithm in COBE \citep{wei2022model} as our CW-OFUL algorithm, then Theorem 3 in \citet{wei2022model} suggests that COBE+CW-OFUL can deal with unknown corruption level $C'$ and obtained an $\tilde O(d\sqrt{K}+dC')$ regret guarantee, which matches the regret of CW-OFUL algorithm with known corruption level $C'$. Note that COBE$+$VOFUL is also computationally inefficient.
} 

\end{remark}
With $\bar{C}=\sqrt{K}$, for the case when $0 \leq C \leq \sqrt{K}$, our regret result is already near-optimal, due to the lower bound for the uncorrupted bandit in Proposition \ref{prop:uc}. Now we show that our $O(K)$ bound, seemingly trivial, is actually optimal for a large class of bandit algorithms. In detail, the following theorem provides a lower bound result for any algorithm for the unknown $C$ case. This is an extension of the lower bound result in \citet{bogunovic2021stochastic} from  $d=2$ to general $d$.

\begin{theorem}\label{thm:lower_uc}
For any algorithm, let $R_K$ be an upper bound of $\text{Regret}(K)$ such that for any bandit instance satisfying Assumption \ref{ass:setup} with $C = 0$, \CC{it satisfies the $\EE\big[\text{Regret}(K)\big] \leq R_K  \leq O(K)$, where the expectation is with respect to the possible randomness of the algorithm and the stochastic noise. Then for the general case with $C = \Omega(R_K/d)$, such an algorithm will have $\EE\big[\text{Regret}(K)\big]  = \Omega(K)$. }
\end{theorem}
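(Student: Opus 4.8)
Theorem \ref{thm:lower_uc} asks us to show that any algorithm achieving near-optimal regret $R_K$ on uncorrupted instances must suffer $\Omega(K)$ regret once the corruption level reaches $C = \Omega(R_K/d)$. This is a lower bound, so the plan is to construct two indistinguishable bandit instances and argue that no algorithm can do well on both simultaneously. The key idea is a corruption-based masking argument: one instance is uncorrupted with a good arm, and the other is a corrupted instance where an adversary disguises a bad arm as the good one by spending its corruption budget $C$ precisely to make the two instances produce identical observations.

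\textbf{Construction.} I would reduce to a one-dimensional-like obstruction embedded in $d$ dimensions, following the $d=2$ argument of \citet{bogunovic2021stochastic} and generalizing it. Consider a fixed decision set containing (essentially) two candidate arms, say $\xb_{\text{good}}$ and $\xb_{\text{bad}}$, whose gap in true reward under $\btheta^*$ is some $\Delta>0$. In the first instance $\mathcal{I}_1$, set $\btheta^*$ so that $\xb_{\text{good}}$ is optimal and there is no corruption ($C=0$); since the algorithm attains $\EE[\text{Regret}(K)]\le R_K$, it must pull the suboptimal arm relatively few times, which forces it to pull $\xb_{\text{good}}$ at least $K - O(R_K/\Delta)$ times in expectation. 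In the second instance $\mathcal{I}_2$, I would swap the roles (make $\xb_{\text{bad}}$ the true optimum under a different parameter $\tilde{\btheta}$), and let the adversary corrupt the reward of $\xb_{\text{good}}$ on exactly those rounds where it is pulled, adding $c_k$ to cancel out the difference in mean reward so that the observed reward distribution of $\xb_{\text{good}}$ matches that of $\mathcal{I}_1$. Because the algorithm in $\mathcal{I}_1$ pulls $\xb_{\text{good}}$ many times, the total corruption spent is at most $\Delta$ times that count, which stays within budget $C$ precisely when $C = \Omega(R_K/d)$ with the right choice of $\Delta \asymp d$ (this is where the dimension enters: the per-round gap can be taken of order $d$ to make the needed budget scale as $R_K/d$).

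\textbf{Indistinguishability and conclusion.} The core of the argument is that under the corruption strategy in $\mathcal{I}_2$, the joint distribution of everything the algorithm observes is identical to that in $\mathcal{I}_1$ (a standard change-of-measure / coupling on the observed transcripts). Hence the algorithm's action sequence has the same law in both instances, so it still pulls $\xb_{\text{good}}$ roughly $K - O(R_K/\Delta)$ times in $\mathcal{I}_2$ as well. But in $\mathcal{I}_2$, $\xb_{\text{good}}$ is suboptimal by the gap $\Delta$, so pulling it that many times yields $\EE[\text{Regret}(K)]\ge \Delta\cdot(K - O(R_K/\Delta)) = \Omega(K)$ once $R_K = O(K)$ and $K$ is large. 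I would then verify that both instances satisfy Assumption \ref{ass:setup} (bounded norms, sub-Gaussian noise) and that the corruption budget constraint $\sum_k|c_k|\le C$ is respected for $C = \Omega(R_K/d)$.

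\textbf{Main obstacle.} The delicate part is making the corruption budget accounting match the claimed threshold $C = \Omega(R_K/d)$ while keeping both instances valid under Assumption \ref{ass:setup}. Specifically, I must choose the reward gap $\Delta$ and the embedding in $\RR^d$ so that (i) the needed corruption to enforce indistinguishability totals at most $C$, (ii) the gap is large enough that pulling the wrong arm $\Omega(K)$ times forces $\Omega(K)$ regret, and (iii) the factor of $d$ emerges correctly — this is where extending from $d=2$ to general $d$ matters, likely by using a construction where $\Delta$ scales with $d$ (so that $R_K/\Delta \asymp R_K/d$) or by spreading the hard instance across coordinates. Balancing these three constraints, and carefully formalizing the "observed transcripts have identical law" step via a coupling argument that respects the fact that the adversary's corruption may depend on the realized action, is the technical heart of the proof.
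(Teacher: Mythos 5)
Your high-level strategy --- two instances, one uncorrupted and one in which an adversary spends its budget to make the transcripts indistinguishable --- is indeed the paper's approach (adapted from \citet{bogunovic2021stochastic}). But the step you flag as ``the technical heart'' is exactly where your proposal goes wrong, in two ways. First, you corrupt the wrong arm: you propose that the adversary corrupts $\xb_{\text{good}}$ ``on exactly those rounds where it is pulled,'' but $\xb_{\text{good}}$ is pulled $K - O(R_K/\Delta)$ times, so masking it costs corruption of order $\Delta K$, which cannot fit in a budget of $O(R_K/d)$ unless $\Delta = O(R_K/(dK))$ --- and then the regret in $\mathcal{I}_2$ from pulling the wrong arm $\Omega(K)$ times is only $\Delta\cdot\Omega(K) = O(R_K/d)$, not $\Omega(K)$. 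The correct move is to keep the good arm's mean identical in both instances and corrupt the \emph{rarely pulled} arm (the one that is suboptimal in $\mathcal{I}_1$ but optimal in $\mathcal{I}_2$), pushing its observed reward in $\mathcal{I}_2$ down to what it would be in $\mathcal{I}_1$; since that arm is pulled few times, the budget stays small. Second, your proposed source of the $1/d$ factor --- taking the gap $\Delta \asymp d$ --- is not viable: under Assumption \ref{ass:setup} rewards are bounded by $LS$, and even ignoring that, a gap of order $d$ with $O(R_K/d)$ pulls of the corrupted arm gives total corruption $O(R_K)$, not $O(R_K/d)$.

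The missing idea is a pigeonhole argument over $d-1$ suboptimal arms with a \emph{constant} gap. The paper takes $\cD = \{\eb_1,\dots,\eb_d\}$, $\btheta_0^* = (1/4,1/8,\dots,1/8)$, noiseless, so each suboptimal arm has gap $1/8$; the $R_K$ regret bound forces at most $8R_K$ total pulls of arms $2,\dots,d$, hence some arm $\ab_i$ is pulled at most $8R_K/(d-1)$ times in expectation (and at most $16R_K/(d-1)$ times with probability $1/2$ by Markov). The corrupted instance raises only that arm's mean from $1/8$ to $3/8$, and the adversary pushes each of its observed rewards back down to $1/8$ at cost $1/4$ per pull, for a total budget of $O(R_K/(d-1))$; on the event that the budget is never exhausted the transcripts coincide, and the algorithm misses the now-optimal arm on $K - O(R_K/d)$ rounds, each costing a constant, giving $\Omega(K)$. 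Without the pigeonhole step your construction cannot produce the $1/d$ in the corruption threshold, which is the entire content of extending the $d=2$ lower bound to general $d$.
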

\begin{remark}
 Consider a class of algorithms $\cA$ whose worst-case regret is $R_K$ in the uncorrupted case. Here we only need to consider $\Omega(d\sqrt{K}) \leq R_K \leq O(K)$, since for any algorithm, $\Omega(d\sqrt{K})$ is the lowest possible worst-case regret \citep{lattimore2018bandit} and $O(K)$ is the highest possible regret. We first show that $\algname$ belongs to $\cA$. Choosing $\bar C = R_K/d$, Theorem \ref{thm:unknown} immediately suggests that $\algname$ enjoys a $R_K$ regret in the uncorrupted case (i.e., $C=0$). Thus $\algname$ belongs to $\cA$. Then we will show that $\algname$ is the best possible one in $\cA$. On the one hand, Theorem \ref{thm:unknown} suggests that $\algname$ suffers a linear regret when $C>\bar C = R_K/d$. On the other hand, Theorem \ref{thm:lower_uc} shows that any algorithm with $R_K$ regret in the uncorrupted case should have a linear regret when $C=\Omega(R_K/d)$. These together imply that $\algname$ is optimal within $\cA$. 

\end{remark}

\section{Conclusion and Future Work}
In this work, we study corrupted linear contextual bandits. We propose a $\algname$ algorithm based on a weighted ridge regression with truncated inverse exploration bonus weights. We show that for both cases when the corruption level $C$ is known or unknown to the agent, $\algname$ achieves a regret that matches the lower bound up to logarithmic factors. We are also interested in achieving the optimal regret when specializing our algorithm to the misspecified linear contextual bandits.

\appendix

\section{Instance-dependent Regrets}\label{sec:instance}

Prior works \citep{lykouris2018stochastic, li2019stochastic,zhao2021linear} have proved instance-dependent regret bounds for corruption-robust linear bandits. We show that  
$\algname$ also enjoys an instance-dependent regret bound. Following \citet{abbasi2011improved}, we define the minimal sub-optimality gap as follows.
\begin{definition}[Minimal sub-optimality gap]
For each round $k\in [K]$ and any action $\xb\in \cD_k$, the sub-optimality gap $\Delta_{\xb,k}$ is defined as
\begin{align}   
   \Delta_{\xb,k}=\max_{\xb^* \in \cD_k}\la \btheta^*, \xb^*\ra - \la \btheta^*, \xb\ra,\notag
\end{align}
and the minimal sub-optimality gap is defined as
\begin{align}
    \Delta=\min_{k\in [K], \xb \in \cD_k}\big\{\Delta_{\xb,k}: \Delta_{\xb,k}\ne 0\big\}.\label{definition-gap-min}
\end{align}
\end{definition}
We assume that the minimal sub-optimality gap is strictly positive.
\begin{assumption}\label{assumption:gap}
The minimal sub-optimality gap is strictly positive, i.e., $\Delta>0$.
\end{assumption}
Under the assumption of positive minimal sub-optimality, the following theorem provides an instance-dependent regret guarantee for $\algname$. 
\begin{theorem}\label{theorem:known-C-gap}
Under the same conditions of Theorem \ref{theorem:known-C}, with high probability at least $1-\delta$, the regret of Algorithm \ref{algorithm-bandit-known} in the first $K$ rounds is upper bounded by
\begin{align*}
   \text{Regret}(K)&\leq O\bigg(R^2d^2\log^2\big((1+KL^2/\lambda)/\delta\big)/\Delta+\frac{\alpha^2dC^2}{\Delta} \times \sqrt{\log\big(3+C^2L^2K/({R^2\lambda \delta})\big)}\notag\\
      &\qquad +S^2d\lambda \log(1+KL^2/\lambda)/\Delta +\frac{Rd^{1.5}}{\alpha}\times \sqrt{\log^{3}\big((1+KL^2/\lambda)/\delta\big)}\notag\\
    &\qquad +\frac{dS\sqrt{\lambda}}{\alpha}\times\sqrt{\log^{2}\big((1+KL^2/\lambda)/\delta\big)}+d C \sqrt{\log^{2}\big((1+KL^2/\lambda)/\delta\big)}\bigg)
\end{align*}
\CC{In addition, if choosing $\alpha=(R\sqrt{d}+\sqrt{\lambda}S)/C$ and $\lambda=R^2/S^2$, the regret can be upper bounded by}
\begin{align*}
    \text{Regret}(K)&\leq  \tilde O(d^2/\Delta + dC).
\end{align*}
\end{theorem}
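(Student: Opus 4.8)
The plan is to reuse the optimism-based per-round regret bound that already underlies Theorem \ref{theorem:known-C} and then exploit the positive gap assumption (Assumption \ref{assumption:gap}) to convert a linear-in-bonus regret sum into a quadratic-in-bonus one on the well-controlled rounds. First I would invoke Lemma \ref{lemma:weighted-concentration}, which guarantees $\|\btheta_k - \btheta^*\|_{\bSigma_k} \le \beta$ for all $k$ on a $1-\delta$ event. Combined with the optimistic action rule in Line \ref{algorithm1-line8} of Algorithm \ref{algorithm-bandit-known}, a standard argument gives the instantaneous regret bound $r_k := \max_{\xb\in\cD_k}\la\btheta^*,\xb\ra - \la\btheta^*,\xb_k\ra \le 2\beta\|\xb_k\|_{\bSigma_k^{-1}}$.

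The crucial step is to partition the rounds according to the truncation in Line \ref{algorithm1-line6}: let $\mathcal{K}_1 = \{k : w_k = 1\} = \{k : \|\xb_k\|_{\bSigma_k^{-1}} \le \alpha\}$ and $\mathcal{K}_2 = \{k : w_k < 1\}$. On $\mathcal{K}_1$ the weight is exactly one, so the unweighted bonus $\sum_{k\in\mathcal{K}_1}\|\xb_k\|_{\bSigma_k^{-1}}^2$ coincides with the weighted quantity $\sum_k w_k\|\xb_k\|_{\bSigma_k^{-1}}^2$, which the weighted elliptical potential (log-determinant) lemma bounds by $O(d\log(1+KL^2/\lambda))$. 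Here I would apply the gap trick: whenever $r_k \ne 0$ we have $r_k \ge \Delta$, hence $r_k \le r_k^2/\Delta \le 4\beta^2\|\xb_k\|_{\bSigma_k^{-1}}^2/\Delta$; summing over $\mathcal{K}_1$ and plugging in $\beta^2 = O(R^2 d\log + \lambda S^2 + \alpha^2 C^2)$ produces the three gap-divided terms. On $\mathcal{K}_2$ the gap trick is useless because the unweighted bonus sum is no longer controlled by the potential lemma; instead I keep the linear bound $r_k \le 2\beta\|\xb_k\|_{\bSigma_k^{-1}}$ and use the identity $\|\xb_k\|_{\bSigma_k^{-1}} = w_k\|\xb_k\|_{\bSigma_k^{-1}}^2/\alpha$ valid on $\mathcal{K}_2$, so that $\sum_{k\in\mathcal{K}_2}\|\xb_k\|_{\bSigma_k^{-1}} \le \alpha^{-1}\sum_k w_k\|\xb_k\|_{\bSigma_k^{-1}}^2 = O(\alpha^{-1}d\log)$. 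This reproduces exactly the three gap-independent terms $\tilde O(Rd^{1.5}/\alpha + dS\sqrt\lambda/\alpha + dC)$, which coincide with the corruption/truncation terms already appearing in Theorem \ref{theorem:known-C}. Adding the two groups and substituting $\alpha = (R\sqrt d + \sqrt\lambda S)/C$ and $\lambda = R^2/S^2$ collapses the bound to $\tilde O(d^2/\Delta + dC)$.

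I expect the main obstacle to be the corruption-dependent gap term, namely $\frac{\alpha^2 dC^2}{\Delta}\sqrt{\log(3 + C^2L^2K/(R^2\lambda\delta))}$, whose logarithmic argument is nonstandard (it carries $C^2$, $K$ and $1/\delta$ inside the log) and therefore cannot come from a direct application of the plain potential lemma to the $\alpha^2 C^2$ part of $\beta^2$. Handling it cleanly will require isolating the corruption contribution to $r_k^2$ on $\mathcal{K}_1$ and controlling it through a dedicated high-probability counting/concentration argument, rather than the worst-case bias bound $I_2 \le C\alpha$ used for Theorem \ref{theorem:known-C}. The conceptual point that makes the whole argument work, however, is that the weight truncation provides a clean dichotomy: the high-confidence rounds ($w_k = 1$) are precisely the ones on which the gap can be cashed in via the elliptical potential lemma, while the low-confidence rounds ($w_k < 1$) are few enough (each contributing at least $\alpha^2$ to the weighted potential) that the original linear-in-bonus analysis already yields a gap-free $\tilde O(dC)$ contribution.
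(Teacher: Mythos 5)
Your proposal is correct and follows essentially the same route as the paper: the same event $\cE$ and per-round optimism bound, the same split into $w_k=1$ and $w_k<1$ rounds with the $w_k<1$ part handled verbatim as in Theorem \ref{theorem:known-C}, and the gap cashed in only on the $w_k=1$ rounds via the elliptical potential lemma (the paper phrases this as a self-bounding count of the rounds with bonus at least $\Delta/(2\beta)$, i.e., $m\Delta \le J_1 \le 2\beta\sqrt{2dm\log(\cdot)}$, rather than your pointwise conversion $r_k \le r_k^2/\Delta$, but the two are interchangeable up to keeping the $\min(1,\cdot)$ inside the potential sum). Your worry about the $\frac{\alpha^2 dC^2}{\Delta}\sqrt{\log(3+C^2L^2K/(R^2\lambda\delta))}$ term is unfounded: in the paper it arises simply from expanding the $\alpha^2C^2$ piece of $\beta^2$ inside the $O(\beta^2 d\log(1+KL^2/\lambda)/\Delta)$ bound, with no additional concentration argument.
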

\begin{remark}
Our regret is strictly better than the $\tilde O(d^{2.5}C/\Delta + d^6/\Delta^2)$ regret proved by \citet{li2019stochastic} under a stronger assumption. \CC{Meanwhile, \citet{zhao2021linear} implies an $\tilde O(d^2C/\Delta)$ regret for their algorithm under the known $C$ case, which is also worse than our result.}
\end{remark}

\section{Overview of Key Proof Techniques}\label{section:6}
In this section, we give an overview of the main technical difficulty and our proof technique to derive Theorem \ref{theorem:known-C}. 

By the standard regret decomposition technique from \citet{abbasi2011improved}, we upper bound the regret by the sum of the exploration bonuses times the confidence radius: 
\begin{align}
    \text{Regret}(K) = O\bigg(\beta \cdot \sum_{k=1}^K\sqrt{\xb_k^{\top}\bSigma_k^{-1}\xb_k} \bigg).\label{help:98}
\end{align}
Lemma \ref{lemma:weighted-concentration} suggests $\beta \sim R\sqrt{d}+\alpha C$. Therefore, we only need to bound the summation of the exploration bonuses. For the basic case when $w_k = 1$, we bound it using the elliptical potential lemma \citep{abbasi2011improved} as follows
\begin{align}
     \sum_{w_k = 1}\sqrt{\xb_k^{\top}\bSigma_k^{-1}\xb_k} \leq \sum_{k=1}^K\sqrt{\xb_k^{\top}\bigg(\lambda\Ib + \sum_{i=1}^{k-1}\xb_i\xb_i^\top\bigg)^{-1}\xb_k} \sim \tilde O(\sqrt{dK}),\label{eq:101}
\end{align}
which contributes to the corruption-independent term $dR\sqrt{K}$ in our regret. For the case when $w_k < 1$, however, we are facing the \emph{weighted} covariance matrix and cannot directly use the elliptical potential lemma. A trivial approach is to lower bound the weights by their \emph{uniform} lower bound, i.e., 
\begin{align}
    \lambda\Ib + \sum_{i=1}^{k-1}w_i\xb_i\xb_i^\top \succeq \min_{1\leq i\leq k-1}{w_i} \cdot \bigg(\lambda \Ib+\sum_{i=1}^{k-1} \xb_i\xb_i^\top\bigg).\label{help:99}
\end{align}
By the definition of the weight $w_i$ in Algorithm \ref{algorithm-bandit-known} and a crude upper bound for the exploration bonus, we conclude from the definition of $w_k$ that $w_k  = \Omega(\alpha)$. Substituting it into~\eqref{help:99}, we only obtain a regret $\tilde O(\sqrt{dK}/\alpha)$, which is not satisfying.


To overcome this issue, we recall the definition for weight $w_k<1$ in Algorithm \ref{algorithm-bandit-known}: $w_k=\alpha/ \|\xb_k\|_{{\bSigma}_k^{-1}}$ and we can bound the summation of the exploration bonuses as
\begin{align}
     \sum_{w_k <1}\sqrt{\xb_k^{\top}\bSigma_k^{-1}\xb_k} =  \sum_{k=1}^K w_k\xb_k^{\top}\bSigma_k^{-1}\xb_k/\alpha \sim \tilde O(d/\alpha).\label{help:100}
\end{align}
Combining the results in \eqref{eq:101} and \eqref{help:100} into \eqref{help:98}, we can prove the final regret.

\section{Proof of Theorem \ref{theorem:known-C}}\label{section: proof of known C}
\CC{In this section, we provide the proof of Theorem \ref{theorem:known-C}. For simplicity, we use $\cE$ to denote the following event:
 \begin{align}
     \cE=\bigg\{\|\btheta_k-\btheta^*\|_{\bSigma_k}\leq \beta,\forall k\in [K]\bigg\}.\notag
 \end{align}
Lemma \ref{lemma:weighted-concentration} shows that $ \Pr(\cE)\ge 1-\delta$.}
\begin{lemma}\label{lemma:one-step-regret}
If setting the confidence radius $\beta=R\sqrt{d\log \big((1+KL^2/\lambda)/\delta\big)}+\alpha C+\sqrt{\lambda}S$ in Algorithm~\ref{algorithm-bandit-known}, then on the event $\cE$, for each round $k\in[K]$, the regret at round $k$ is upper bounded by
\begin{align*}
    \Delta_k= \max_{\xb \in \cD_k}\la \btheta^*, \xb\ra - \la \btheta^*, \xb_k\ra\leq 2\beta\sqrt{\xb_k^{\top}\bSigma_k^{-1}\xb_k}.
\end{align*}
\end{lemma}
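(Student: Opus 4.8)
The plan is to run the standard optimism-in-the-face-of-uncertainty argument, in which all the difficulty specific to the weighted/corrupted setting has already been packaged into the confidence radius $\beta$ through the event $\cE$. Throughout I work on $\cE$, so that $\|\btheta_k-\btheta^*\|_{\bSigma_k}\leq\beta$ holds at every round $k$, a fact guaranteed by Lemma \ref{lemma:weighted-concentration}. Writing $\xb_k^*=\argmax_{\xb\in\cD_k}\la\btheta^*,\xb\ra$ for an optimal action at round $k$, the goal is to bound $\Delta_k=\la\btheta^*,\xb_k^*\ra-\la\btheta^*,\xb_k\ra$. The single workhorse inequality is a per-action deviation bound: for any $\xb\in\cD_k$, a Cauchy--Schwarz step with respect to the dual norms $\|\cdot\|_{\bSigma_k}$ and $\|\cdot\|_{\bSigma_k^{-1}}$ gives $|\la\btheta_k-\btheta^*,\xb\ra|\leq\|\btheta_k-\btheta^*\|_{\bSigma_k}\,\|\xb\|_{\bSigma_k^{-1}}\leq\beta\sqrt{\xb^\top\bSigma_k^{-1}\xb}$ on $\cE$. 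This controls the gap between the true linear value $\la\btheta^*,\cdot\ra$ and its plug-in estimate $\la\btheta_k,\cdot\ra$, and I will apply it twice, once to $\xb_k^*$ and once to $\xb_k$.

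Next I would invoke optimism. Applying the deviation bound to $\xb_k^*$ yields $\la\btheta^*,\xb_k^*\ra\leq\la\btheta_k,\xb_k^*\ra+\beta\|\xb_k^*\|_{\bSigma_k^{-1}}$, i.e.\ the UCB score of $\xb_k^*$ is an upper confidence bound on its true value. Since Line \ref{algorithm1-line8} selects $\xb_k$ as the maximizer over $\cD_k$ of the UCB score $\btheta_k^\top\xb+\beta\sqrt{\xb^\top\bSigma_k^{-1}\xb}$ using the very same radius $\beta$ that defines $\cE$, the UCB score of $\xb_k^*$ is dominated by that of $\xb_k$. Chaining these two facts gives $\la\btheta^*,\xb_k^*\ra\leq\la\btheta_k,\xb_k\ra+\beta\|\xb_k\|_{\bSigma_k^{-1}}$.

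Finally, I would apply the deviation bound once more, now to the selected action $\xb_k$, to convert the estimate back to the true value: $\la\btheta_k,\xb_k\ra\leq\la\btheta^*,\xb_k\ra+\beta\|\xb_k\|_{\bSigma_k^{-1}}$. Substituting this into the previous chain and subtracting $\la\btheta^*,\xb_k\ra$ produces $\Delta_k\leq 2\beta\sqrt{\xb_k^\top\bSigma_k^{-1}\xb_k}$, which is exactly the claim. This lemma is essentially routine given Lemma \ref{lemma:weighted-concentration}; there is no genuine obstacle beyond correctly combining the three norm-duality applications with the $\argmax$ property. The only step deserving explicit care is the optimism inequality in the second paragraph, which hinges on the greedy rule of Line \ref{algorithm1-line8} employing the same $\beta$ as the concentration event, so that the UCB is a valid upper bound; everything corruption-specific is already absorbed into $\beta$ upstream.
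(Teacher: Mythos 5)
Your proposal is correct and follows essentially the same route as the paper's proof: establish optimism via Cauchy--Schwarz on the event $\cE$, chain through the UCB of the optimal action and the selection rule of Line \ref{algorithm1-line8}, and apply the deviation bound once more to the chosen action to collect the factor $2\beta\sqrt{\xb_k^{\top}\bSigma_k^{-1}\xb_k}$. No gaps.
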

\begin{proof}[Proof of Theorem \ref{theorem:known-C}]
Based on the event $\cE$, the regret in the first $K$ round can be decomposed into two parts based on the weight $w_k$:
\begin{align}
    \text{Regret}(K)&=\sum_{k=1}^K \max_{\xb \in \cD_k}\la \btheta^*, \xb\ra - \la \btheta^*, \xb_k\ra\notag\\
    &\leq \min\bigg(2,\sum_{k=1}^K 2\beta\sqrt{\xb_k^{\top}\bSigma_k^{-1}\xb_k}\bigg)\notag\\
    &= \underbrace{\sum_{k: w_k=1}  \min\bigg(2,2\beta\sqrt{\xb_k^{\top}\bSigma_k^{-1}\xb_k}\bigg)}_{I_1}+\underbrace{\sum_{k: w_k<1}  \min\bigg(2,2\beta\sqrt{\xb_k^{\top}\bSigma_k^{-1}\xb_k}\bigg)}_{I_2},\label{eq:3-1}
\end{align}
where the inequality holds due to the Lemma \ref{lemma:one-step-regret} with the fact that the suboptimality in each round $k$ is no more than $2$. 

For the term $I_1$, we consider for all rounds $k\in [K]$ with $w_k=1$ and we assume these rounds can be listed as $\{k_1,..,k_m\}$ for simplicity. With this notation, for each $i\leq m$, we can construct the auxiliary covariance matrix $\Ab_i=\lambda \Ib+ \sum_{j=1}^{i-1}\xb_{k_j}\xb_{k_j}^{\top}$. Due to the definition of original covariance matrix $\bSigma_k$ in Algorithm (Line \ref{algorithm1-line4}), we have
\begin{align}
    \bSigma_{k_i} \ge \lambda \Ib+ \sum_{j=1}^{i-1}w_{k_j}\xb_{k_j}\xb_{k_j}^{\top} = \Ab_i. \notag
\end{align} 
According to Lemma \ref{lemma:inverse}, it further implies that for vector $\xb_{k_i}$, we have
\begin{align}
    \xb_{k_i}^{\top}\bSigma_{k_i}^{-1} \xb_{k_i}\leq \xb_{k_i}^{\top}\Ab_i^{-1} \xb_{k_i}.\label{eq:3-2}
\end{align}
Therefore, the term $I_1$ can be bounded by
\begin{align}
    I_1&= \sum_{k: w_k=1}  \min\bigg(2, 2\beta \sqrt{\xb_k^{\top}\bSigma_{k}^{-1} \xb_k}\bigg)\notag\\
    &\leq \sum_{i=1}^m 2\beta  \min\bigg(1,\sqrt{\xb_{k_i}^{\top}\bSigma_{k_i}^{-1} \xb_{k_i}} \bigg)\notag\\
    &\leq 2\beta\sum_{i=1}^m \min\bigg(1,\sqrt{\xb_{k_i}^{\top}\Ab_i^{-1} \xb_{k_i}}\bigg)\notag\\
    &\leq 2\beta \sqrt{\sum_{i=1}^m 1 \times \sum_{i=1}^m \min\Big(1,\xb_{k_i}^{\top}\Ab_i^{-1} \xb_{k_i}\Big)}\notag\\
    &\leq 2\beta\sqrt{2dK\log(1+KL^2/\lambda)},\label{eq:3-3}
\end{align}
where the first inequality holds since $\beta\ge 1$, the second inequality holds due to \eqref{eq:3-2}, the third inequality holds due to Cauchy-Schwarz inequality, the last inequality holds due to Lemma \ref{Lemma:abba} with the facts that $m\leq K$ and $\|\xb_{k_i}\|_2\leq L$.

For the second term $I_2$, according to the definition for weight $w_k<1$ in Algorithm \ref{algorithm-bandit-known}, we have $w_k=\alpha /\sqrt{\xb_k^{\top}\bSigma_{k}^{-1} \xb_k}$, which implies that 
\begin{align}
    I_2&=\sum_{k: w_k<1}  \min\bigg(2,2\beta\sqrt{\xb_k^{\top}\bSigma_k^{-1}\xb_k}\bigg)\notag\\
    &= \sum_{k: w_k<1}  \min\Big(2,2\beta w_k\xb_k^{\top}\bSigma_k^{-1}\xb_k/\alpha\Big)\notag\\
    &\leq \sum_{k: w_k<1}\min \Big((2+2\beta/\alpha), (2+2\beta/\alpha) w_k\xb_k^{\top}\bSigma_k^{-1}\xb_k\Big)\notag\\
    &= \sum_{k: w_k<1}(2+2\beta/\alpha) \min \Big(1, w_k\xb_k^{\top}\bSigma_k^{-1}\xb_k\Big),\label{eq:add3-4}
\end{align}
where the second equation holds due to the definition of weight $w_k$.
Now, we assume the rounds with weight $w_k<1$ can be listed as $\{k_1,..,k_m\}$ for simplicity. In addition, we introduce the auxiliary vector $x'_i$ as $x'_i=\sqrt{w_{k_i}}\xb_{k_i}$ and matrix $\bSigma'_i$ as 
\begin{align*}
    \bSigma'_i= \lambda \Ib+\sum_{j=1}^{i-1}w_{k_j} \xb_{k_j}\xb_{k_j}^\top= \lambda \Ib+\sum_{j=1}^{i-1}\xb'_{j}(\xb'_{j})^\top.
\end{align*}
 According to Lemma \ref{lemma:inverse}, we have  $(\bSigma'_{i})^{-1}\succeq \bSigma_{k_i}^{-1}$. Therefore, for each $i \in [m]$, we have
\begin{align}
   \xb_{k_i}^{\top} (\bSigma'_{i})^{-1}\xb_{k_i}&\ge \xb_{k_i}^{\top} \bSigma_{k_i}^{-1}\xb_{k_i},\label{eq:3-5}
\end{align}
where the inequality holds due to $(\bSigma'_{i})^{-1}\succeq \bSigma_{k_i}^{-1}$.
Now, taking a summation of \eqref{eq:3-5} over all rounds $k_i$, we have 
\begin{align}
    \sum_{i=1}^m \min \Big(1,w_{k_i}\xb_{k_i}^{\top}\bSigma_{k_i}^{-1}\xb_{k_i}\Big)&\leq \sum_{i=1}^m  \min \Big(1,w_{k_i} \xb_{k_i}^{\top}(\bSigma'_{i})^{-1}\xb_{k_i}\Big)\notag\\
    &=\sum_{i=1}^m\min \Big(1, (\xb_i')^{\top}(\bSigma'_{i})^{-1}\xb'_{i}\Big)\notag\\
    &\leq 2d \log (1+KL^2/\lambda),\label{eq:3-6}
\end{align}
  where the first inequality holds due to \eqref{eq:3-5}, the second inequality holds due to Lemma \ref{Lemma:abba} with the facts that $m\leq K$. Substituting the result in \eqref{eq:3-6} into \eqref{eq:add3-4}, the term $I_2$ can be upper bounded by
  \begin{align}
      I_2&\leq  \sum_{k: w_k<1}(2+2\beta/\alpha) \min \Big(1, w_k\xb_k^{\top}\bSigma_k^{-1}\xb_k\Big)\notag\\
      &\leq (2+2\beta/\alpha)\times 2d \log (1+KL^2/\lambda).\label{eq:3-7}
  \end{align}
  
 Finally, substituting the results in \eqref{eq:3-3} and \eqref{eq:3-7} into \eqref{eq:3-1}, the regret can be upper bounded by
 \begin{align*}
      \text{Regret}(K)
      &\leq  2\beta\sqrt{2dK\log(1+KL^2/\lambda)}+(2+2\beta/\alpha)\times 2d \log (1+KL^2/\lambda)\notag\\
      &=O\bigg(dR\sqrt{K\log^2\big((1+KL^2/\lambda)/\delta\big)}+\alpha C\sqrt{dK\log^2\big((1+KL^2/\lambda)/\delta\big)}\notag\\
      &\qquad +S\sqrt{d\lambda K \log(1+KL^2/\lambda)} + \frac{Rd^{1.5}}{\alpha}\times \sqrt{\log^{3}\big((1+KL^2/\lambda)/\delta\big)}\notag\\
      &\qquad + \frac{dS\sqrt{\lambda}}{\alpha}\times\sqrt{\log^{2}\big((1+KL^2/\lambda)/\delta\big)} + d C \sqrt{\log^{2}\big((1+KL^2/\lambda)/\delta\big)} \bigg).
 \end{align*}
  Therefore, we complete the proof of Theorem \ref{theorem:known-C}.
\end{proof}
\section{Proof of Theorem \ref{theorem:known-C-gap}}
In this section, we present the detailed proof of  Theorem \ref{theorem:known-C}.
\begin{proof}[Proof of Theorem \ref{theorem:known-C-gap}]
Based on the event $\cE$, the regret in round $k\in[K]$ is upper bounded by
\begin{align*}
 \Delta_k=\max_{\xb \in \cD_k}\la \btheta^*, \xb\ra - \la \btheta^*, \xb_k\ra\leq 2\beta\sqrt{\xb_k^{\top}\bSigma_k^{-1}\xb_k}.
\end{align*}
On the other hand, according to Assumption \ref{assumption:gap}, the regret in round $k\in [K]$ satisfies that $\Delta_k=0$ or $\Delta_k\ge \Delta$. Combining these two results, for round $k\in [K]$ with uncertainty $2\beta\sqrt{\xb_k^{\top}\bSigma_k^{-1}\xb_k}< \Delta$, the regret must satisfy $\Delta_k=0$.
Therefore, the regret in the first $K$ rounds can be decomposed to two part based on the weight $w_k$ and exploration bonus $\sqrt{\xb_k^{\top}\bSigma_k^{-1}\xb_k}$:
\begin{align}
    \text{Regret}(K)&=\sum_{k=1}^K \max_{\xb \in \cD_k}\la \btheta^*, \xb\ra - \la \btheta^*, \xb_k\ra\notag\\
    &=\sum_{k: 2\beta\sqrt{\xb_k^{\top}\bSigma_k^{-1}\xb_k}\ge \Delta} \max_{\xb \in \cD_k}\la \btheta^*, \xb\ra - \la \btheta^*, \xb_k\ra \notag\\
    &\leq \min\bigg(2,\sum_{k: 2\beta\sqrt{\xb_k^{\top}\bSigma_k^{-1}\xb_k}\ge \Delta} 2\beta\sqrt{\xb_k^{\top}\bSigma_k^{-1}\xb_k}\bigg)\notag\\
    &=\sum_{k: w_k=1, 2\beta\sqrt{\xb_k^{\top}\bSigma_k^{-1}\xb_k}\ge \Delta}  \min\bigg(2,2\beta\sqrt{\xb_k^{\top}\bSigma_k^{-1}\xb_k}\bigg)\notag\\
&\qquad +\sum_{k: w_k<1,     2\beta\sqrt{\xb_k^{\top}\bSigma_k^{-1}\xb_k}\ge \Delta}  \min\bigg(2,2\beta\sqrt{\xb_k^{\top}\bSigma_k^{-1}\xb_k}\bigg)\notag\\
    &\leq \underbrace{\sum_{k: w_k=1, k: 2\beta\sqrt{\xb_k^{\top}\bSigma_k^{-1}\xb_k}\ge \Delta }  \min\bigg(2,2\beta\sqrt{\xb_k^{\top}\bSigma_k^{-1}\xb_k}\bigg)}_{J_1}\notag\\
    &\qquad +\underbrace{\sum_{k: w_k<1}  \min\bigg(2,2\beta\sqrt{\xb_k^{\top}\bSigma_k^{-1}\xb_k}\bigg)}_{J_2},\label{eq:5-1}
\end{align}
where the inequality holds due to Lemma \ref{lemma:one-step-regret} with the fact that the suboptimality in each round is no more than $2$. 
Notice that the term $J_2$ is equal to the term $I_2$ in the proof of Theorem \ref{theorem:known-C} (See \eqref{eq:3-1}) and with the same argument, it can be upper bounded by
\begin{align}
    J_2&\leq O\bigg(\frac{Rd^{1.5}}{\alpha}\times \sqrt{\log^{3}\big((1+KL^2/\lambda)/\delta\big)}+\frac{dS\sqrt{\lambda}}{\alpha}\times\sqrt{\log^{2}\big((1+KL^2/\lambda)/\delta\big)}\notag\\
    &\qquad+ d C \sqrt{\log^{2}\big((1+KL^2/\lambda)/\delta\big)}\bigg),\label{eq:5-2}
\end{align}
where the inequality comes from \eqref{eq:3-7}.
For the term $J_1$, we consider for all rounds $k\in [K]$ with $w_k=1$ and exploration bonus $2\beta\sqrt{\xb_k^{\top}\bSigma_k^{-1}\xb_k}\ge \Delta$. For simplicity, we assume these rounds can be listed as $\{k_1,..,k_m\}$.
 With this notation, for each $i\leq m$, we can construct the auxiliary covariance matrix $\Ab_i=\lambda \Ib+ \sum_{j=1}^{i-1}\xb_{k_j}\xb_{k_j}^{\top}$. Due to the definition of original covariance matrix $\bSigma_k$ in Algorithm (Line \ref{algorithm1-line4}), we have
\begin{align}
    \bSigma_{k_i} \ge \lambda \Ib+ \sum_{j=1}^{i-1}w_{k_j}\xb_{k_j}\xb_{k_j}^{\top} = \Ab_i. \notag
\end{align}
According to Lemma \ref{lemma:inverse}, it further implies that for vector $\xb_{k_i}$, we have
\begin{align}
    \xb_{k_i}^{\top}\bSigma_{k_i}^{-1} \xb_{k_i}\leq \xb_{k_i}^{\top}\Ab_i^{-1} \xb_{k_i}.\label{eq:5-3}
\end{align}
Therefore, the term $J_1$ can be bounded by
\begin{align}
    J_1&= \sum_{k: w_k=1, 2\beta \sqrt{\xb_k^{\top}\bSigma_{k}^{-1} \xb_k} \ge \Delta}  \min\bigg(2, 2\beta \sqrt{\xb_k^{\top}\bSigma_{k}^{-1} \xb_k}\bigg)\notag\\
    &\leq \sum_{i=1}^m 2\beta  \min\bigg(1,\sqrt{\xb_{k_i}^{\top}\bSigma_{k_i}^{-1} \xb_{k_i}} \bigg)\notag\\
    &\leq 2\beta\sum_{i=1}^m \min\bigg(1,\sqrt{\xb_{k_i}^{\top}\Ab_i^{-1} \xb_{k_i}}\bigg)\notag\\
    &\leq 2\beta \sqrt{\sum_{i=1}^m 1 \times \sum_{i=1}^m \min\Big(1,\xb_{k_i}^{\top}\Ab_i^{-1} \xb_{k_i}\Big)}\notag\\
    &\leq 2\beta\sqrt{2dm\log(1+KL^2/\lambda)},\label{eq:5-4}
\end{align}
where the first inequality holds since $\beta\ge 1$, the second inequality holds due to \eqref{eq:5-3}, the third inequality holds due to Cauchy-Schwarz inequality, the fourth inequality holds due to Lemma \ref{Lemma:abba} with the facts that $m\leq K$ and $\|\xb_{k_i}\|_2\leq L$. On the other hand, the term $J_1$ is lower bounded by
\begin{align}
    J_1= \sum_{k: w_k=1, 2\beta \sqrt{\xb_k^{\top}\bSigma_{k}^{-1} \xb_k}\ge \Delta}  \min\bigg(2, 2\beta \sqrt{\xb_k^{\top}\bSigma_{k}^{-1} \xb_k}\bigg) \ge m\times \Delta,\label{eq:5-5}
\end{align}
where the inequality holds due to the definition of $k_i$ with the fact that $\Delta\leq 2$. Combining the upper and lower bound for term $J_1$, we have
\begin{align*}
    m\times \Delta \leq 2\beta\sqrt{2dm\log(1+KL^2/\lambda)},
\end{align*}
which further implies that
\begin{align}
    m\leq O\big(\beta^2d \log(1+KL^2/\lambda)/\text{gap}^2_{\min} \big).\label{eq:5-6}
\end{align}
Substituting the upper bound of $m$ in \eqref{eq:5-6} into \eqref{eq:5-4}, the term $J_1$ can be upper bounded by
\begin{align}
    J_1\leq O\big(\beta^2 d\log(1+KL^2/\lambda)/\Delta\big).\label{eq:5-7}
\end{align}
Finally, substituting the upper bounds of term $J_2$ in \eqref{eq:5-2} and term $J_1$ in \eqref{eq:5-7} into \eqref{eq:5-1}, the regret can be upper bounded by
\begin{align*}
    \text{Regret}(K)&\leq O\big(\beta^2 d\log(1+KL^2/\lambda)/\Delta\big)+O\bigg(\frac{Rd^{1.5}}{\alpha}\times \sqrt{\log^{3}\big((1+KL^2/\lambda)/\delta\big)}\notag\\
    &\qquad +\frac{dS\sqrt{\lambda}}{\alpha}\times\sqrt{\log^{2}\big((1+KL^2/\lambda)/\delta\big)}+d C \sqrt{\log^{2}\big((1+KL^2/\lambda)/\delta\big)}\bigg)\notag\\
      &=O\bigg(R^2d^2\log^2\big((1+KL^2/\lambda)/\delta\big)/\Delta+\frac{\alpha^2dC^2}{\Delta}\times \sqrt{\log\big(3+C^2L^2K/({R^2\lambda \delta})\big)}\notag\\
      &\qquad +S^2d\lambda \log(1+KL^2/\lambda)/\Delta +\frac{Rd^{1.5}}{\alpha}\times \sqrt{\log^{3}\big((1+KL^2/\lambda)/\delta\big)}\notag\\
    &\qquad +\frac{dS\sqrt{\lambda}}{\alpha}\times\sqrt{\log^{2}\big((1+KL^2/\lambda)/\delta\big)}+d C \sqrt{\log^{2}\big((1+KL^2/\lambda)/\delta\big)}\bigg).
\end{align*}
Therefore, we complete the proof of Theorem \ref{theorem:known-C-gap}.
\end{proof}

\section{Proof of Theorem \ref{thm:unknown}}
\begin{proof}[Proof of Theorem \ref{thm:unknown}]
We discuss two cases here. 
 \begin{itemize}[leftmargin = *]
    \item For the case $C \leq \bar{C}$, we know that $\bar{C}$ is still a valid upper bound of the corruption level. Thus, $\algname$ with a $\bar{C}$ corruption level runs successfully, and its regret is upper bounded by $\tilde O(dR\sqrt{K} + d\bar{C}) = \tilde O(dR\sqrt{K}+d\bar{C})$ as Theorem \ref{theorem:known-C} suggests. 
    \item For the case $C = \Omega(\bar{C})$, $\algname$ can not guarantee a sublinear regret. Thus a trivial regret bound (i.e., regret at each round is bounded by 2) applies. 
\end{itemize}
\end{proof}

\section{Proof of Theorem \ref{thm:lower_uc}}
We introduce our proof of Theorem \ref{thm:lower_uc}, which is adapted from \citet{bogunovic2021stochastic}. 
\begin{proof}[Proof of Theorem \ref{thm:lower_uc}]
\CC{In this proof, we consider an arbitrary algorithm satisfying the conditions in the statement of Theorem \ref{thm:lower_uc}, which will run $K$ rounds for any bandit instance. We consider an uncorrupted bandit instance $A_0$ defined as follows. $A_0$ has the decision sets $\cD_k = \cD$. Here $\cD = \{\ab_i\}_{1 \leq i \leq d}$, where $\ab_i = \eb_i$ is the basis in the $d$-dimensional space. Let $\btheta_0^* = (1/4, \underbrace{1/8, \dots, 1/8}_{(d-1)-\text{times}}) \in \RR^d$ and $\epsilon_i = 0$. It is easy to see that the optimal policy is to select $\ab_1$ at each round, and the regret to select a sub-optimal arm is $1/8$. Since the regret of the algorithm without corruption satisfies $\EE\big[\text{Regret}(K)\big]<R_
K$, and all the regret comes from selecting $\ab_2, \dots, \ab_d$, we have the expected number of rounds to select $\ab_2, \dots, \ab_d$ is at most $R_K/(1/8) = 8R_K$. Then by the pigeonhole principle, there exists some $2 \leq i \leq d$ such that the expected number of times to select $\ab_i$ is less than $8R_K/(d-1)$. Without loss of generality, we suppose $i = 2$. Then by Markov inequality, with probability at least $1/2$, the number of times to select $\ab_2$ is less than $16R_K/(d-1)$.}

\CC{Next, we consider a corrupted bandit instance $A_1$ defined as follows. $A_1$ has the same decision set $\cD = \{\eb_i\}$ as $A_0$, while it has a different $\btheta_1^* = (1/4, 3/8,\underbrace{ 1/8, \dots, 1/8}_{(d-2)-\text{times}})$. $A_1$ is also noiseless, i.e., $\epsilon_i = 0$. Unlike $A_0$, we have an adversary to attack $A_1$ as follows: whenever $\ab_2$ is selected and the total corruption level up to the previous step is no more than $4R_K/(d-1)-1/4$, the adversary corrupts the reward from $3/8$ to $1/8$. Otherwise, the adversary stops to corrupt the reward. With this adversary, the  
corruption level $C$ is upper bounded by $4R_K/(d-1)-1/4+1/4=\Omega(R_K/d)$.}
 
\CC{For this adversary, since for $A_1$, each selection of $\ab_2$ returns a reward $1/8$, then the agent can not tell the difference between $A_0$ and $A_1$ until the total corruption level reaches the threshold $4R_k/(d-1)$ and the adversary stops to corrupt the reward. 
Therefore, the sequence of rounds for the agent to select $\ab_2$ with $A_1$ instance is the same as the sequence for the agent to select $\ab_2$ with $A_0$, until the number of rounds to select action $\ab_2$ reaches $4R_k/(d-1)/(1/4)=16R_k/(d-1)$. However, when the total number of times to select $\ab_2$ is less than $16R_K/(d-1)$, the agent cannot differentiate $A_0$ and $A_1$ and will follow the same action sequence as $A_0$. In this case, since for $A_1$, $\ab_2$ is the optimal action, and all the other actions suffer a $1/8$ regret, then the regret on $A_1$ is at least $1/8\cdot (K - 16R_K/(d-1)) = \Omega(K)$, where we use the fact that $R_K \leq O(K)$. Therefore, with probability at least $1/2$, the regret is at least $\Omega(K)$, which further implies that the expected regret is lower bounded by $\EE[\text{Regret(K)}]\ge 1/2\times \Omega(K) =\Omega(K)$. Thus, we finish the proof of Theorem~\ref{thm:lower_uc}.}
\end{proof}

\section{Proof of Lemmas in Sections \ref{section:5} and Appendix \ref{section: proof of known C}}
\subsection{Proof of  Lemma \ref{lemma:weighted-concentration}}
\begin{proof}[Proof of Lemma \ref{lemma:weighted-concentration}]
According to the definition of estimated vector $\btheta_k$ in Algorithm \ref{algorithm-bandit-known} (Line \ref{algorithm1-line5}), we have
\begin{align}
    \btheta_k&=\bSigma_{k}^{-1}\bbb_{k}=\bSigma_k^{-1} \sum_{i=1}^{k-1}w_i \xb_i r_i=\bSigma_k^{-1}\sum_{i=1}^{k-1}w_i\xb_i (\xb_i^{\top} \btheta +\eta_i+c_i). \notag
\end{align}
This equation further implies that the difference between estimated vector $\btheta_k$ and the unknown vector $\btheta^*$ can be decomposed as: 
\begin{align}
    \|\btheta_k-\btheta^*\|_{\bSigma_k}&=\big\|\bSigma_k^{-1}\sum_{i=1}^{k-1}w_i\xb_i (\xb_i^{\top} \btheta^* +\eta_i+c_i) -\btheta^*\big\|_{\bSigma_k}\notag\\
    &=\bigg\|\bSigma_k^{-1}\sum_{i=1}^{k-1}w_i\xb_i (\xb_i^{\top} \btheta +\eta_i +c_i)-\bSigma_k^{-1} \Big(\sum_{i=1}^{k-1}w_i\xb_i\xb_i^{\top} +\lambda \Ib\Big) \btheta^*\bigg\|_{\bSigma_k}\notag\\
    &=\bigg\|\bSigma_k^{-1}\sum_{i=1}^{k-1}w_i\xb_i \eta_i+\bSigma_k^{-1}\sum_{i=1}^{k-1}w_i\xb_i c_i- \lambda \bSigma_k^{-1}\btheta^* \bigg\|_{\bSigma_k}\notag\\
    &\leq \underbrace{\bigg\|\bSigma_k^{-1}\sum_{i=1}^{k-1}w_i\xb_i \eta_i \bigg\|_{\bSigma_k}}_{\text{Stochastic error}:I_1}+\underbrace{\bigg\|\bSigma_k^{-1}\sum_{i=1}^{k-1}w_i\xb_i c_i \bigg\|_{\bSigma_k}}_{\text{Corruption  error}:I_2}+\underbrace{\bigg\|\lambda \bSigma_k^{-1}\btheta^* \bigg\|_{\bSigma_k}}_{\text{Regularization error}:I_3},\label{eq:2-1}
\end{align}
where the inequality holds due to the fact that $\|\ab+\bbb+\bc \|_{\bSigma_k}\leq \|\ab\|_{\bSigma_k}+\|\bbb\|_{\bSigma_k}+\|\bc\|_{\bSigma_k}$.

For the stochastic error term $I_1$, it can be bounded by the concentration Lemma \ref{lemma: concentration-OFUL} in \citet{abbasi2011improved}. More specifically, we introduce the auxiliary vector $\xb'_i$ and noise $\eta'_i$ such that $\xb'_i=\sqrt{w_i}\xb_i$ and $\eta'_i=\sqrt{w_i} \eta_i$. 
According to the definition of weight $\btheta_i$ in Algorithm 
 (Line \ref{algorithm1-line6}), both of these two situations satisfies that the weight $\btheta_i$ is bounded by $w_i\leq 1$. Since the original vector $\xb_i$ satisfies that $\|\xb_i\|_2\leq L$ and the original stochastic noise $\eta_i$ is $R$-sub Gaussian, these results further imply that 
 \begin{align}
     \|\xb'_i\|_2=\|\sqrt{w_i}\xb_i\|_2\leq L, \eta'_i=\sqrt{w_i}\eta_i \text{ is $R$-sub Gaussian.} \notag
 \end{align}
  With this notation, the covariance matrix $\bSigma_k$ and the stochastic error term $I_1$ can be rewritten and bounded as:
  \begin{align}
     \bSigma_k&=\lambda \Ib+ \sum_{i=1}^{k-1}w_i\xb_i\xb_i^{\top} =\lambda \Ib+ \sum_{i=1}^{k-1}\xb'_i(\xb'_i)^{\top} \notag
\end{align}
\begin{align}
     I_1&=\bigg\|\bSigma_k^{-1}\sum_{i=1}^{k-1}w_i\xb_i \eta_i \bigg\|_{\bSigma_k}\notag\\
     &=\bigg\|\sum_{i=1}^{k-1}w_i\xb_i \eta_i\bigg\|_{\bSigma_k^{-1}}\notag\\
     &=\bigg\|\sum_{i=1}^{k-1}\xb'_i \eta'_i\bigg\|_{\bSigma_k^{-1}}\notag\\
     &\leq \sqrt{2R^2  \log\bigg(\frac{\det(\bSigma_k)^{1/2}\det(\bSigma_1)^{-1/2}}{\delta}\bigg)}\notag\\
     &\leq R\sqrt{d\log \big((1+KL^2/\lambda)/\delta\big)},\label{eq:I_1}
  \end{align}
  where the first inequality holds due to Lemma \ref{lemma: concentration-OFUL} and the second inequality holds due to the facts that $\bSigma_k=\lambda \Ib_+ \sum_{i=1}^{k-1}\xb'_i(\xb'_i)^{\top}$ and $\|\xb'\|_2\leq L$.

For the corruption error term $I_2$, it can be bounded by
\begin{align}
     I_2&=\bigg\|\bSigma_k^{-1}\sum_{i=1}^{k-1}w_i\xb_i c_i \bigg\|_{\bSigma_k}\notag\\
     &=\bigg\|\bSigma_k^{-1/2}\sum_{i=1}^{k-1}w_i\xb_i  c_i\bigg\|_2\notag\\
     &\leq \sum_{i=1}^{k-1}\bigg\|\bSigma_k^{-1/2}w_i  \xb_i  c_i\bigg\|_2\notag\\
     &= \sum_{i=1}^{k-1} |c_i|\times w_i \|\bSigma_k^{-1/2}\xb_i\|\notag\\
     &\leq \sum_{i=1}^{k-1}|c_i|\alpha\notag\\
     &\leq \alpha C ,\label{eq:I_2}
\end{align}
where the first inequality holds due to the fact that $\|\ab+\bbb\|_2\leq \|\ab\|_2+\|\bbb\|_2$, the second inequality holds due to the definition of weight $w_i$ in Algorithm (Line \ref{algorithm1-line6}) with the fact that $\bSigma_k \succeq \bSigma_i$ and the last inequality holds due to the definition of corruption level $C$.

For the regularization error term $I_3$, we have 
\begin{align}
    I_3&=\big\|\lambda \bSigma_k^{-1}\btheta^* \big\|_{\bSigma_k}=\lambda \big\|\btheta^{*}\big\|_{\bSigma_k^{-1}}\leq\sqrt{\lambda}  \|\btheta^*\|_2\leq \sqrt{\lambda} S,\label{eq:I_3}
\end{align}
where the first inequality holds due to $\big\|\btheta^{*}\big\|_{\bSigma_k}\leq \|\btheta^*\|_2/\sqrt{\lambda_{\min}(\bSigma_k)}$ with the fact that $\bSigma_k=\lambda \Ib+\sum_{i=1}^{k-1}w_i\xb_i \xb_i^{\top}\succeq \lambda \Ib$ and the last inequality holds due to the assumption that $\|\btheta^*\|_2\leq S$.

Finally, substituting the results in \eqref{eq:I_1}, \eqref{eq:I_2} and \eqref{eq:I_3} into \eqref{eq:2-1}, we have
\begin{align*}
    \|\btheta_k-\btheta^*\|_{\bSigma_k}&\leq I_1+I_2+I_3\leq R\sqrt{d\log \big((1+KL^2/\lambda)/\delta\big)}+\alpha C +\sqrt{\lambda} S.
\end{align*}
Therefore, we finish the proof of Lemma \ref{lemma:weighted-concentration}.
\end{proof}

\subsection{Proof of Lemma \ref{lemma:one-step-regret}}
\begin{proof}[Proof of Lemma \ref{lemma:one-step-regret}]
Firstly, on the event $\cE$, for each round $k\in [K]$ and each action $\xb \in \cD_k$, we have 
\begin{align}
    \btheta_{k}^{\top}\xb+\beta\sqrt{\xb^{\top}\bSigma_{k}^{-1} \xb}- (\btheta^*)^{\top}\xb&=(\btheta_{k}-\btheta^*)^{\top}\xb+\beta\sqrt{\xb^{\top}\bSigma_{k}^{-1} \xb}\notag\\
    &\ge -\|\btheta_{k}-\btheta^*\|_{\bSigma_k} \times \|\xb\|_{\bSigma_k^{-1}}+\beta\sqrt{\xb^{\top}\bSigma_{k}^{-1} \xb}\notag\\
    &\ge -\beta \|\xb\|_{\bSigma_k^{-1}}+\beta\sqrt{\xb^{\top}\bSigma_{k}^{-1}\xb}\notag\\
    &=0,\label{eq:2-2}
\end{align}
where the first inequality holds due to the Cauchy-Schwarz inequality and the last inequality holds due to the definition of $\cE$ in Lemma \ref{lemma:weighted-concentration}. \eqref{eq:2-2} shows that our estimator in Algorithm \ref{algorithm-bandit-known} is optimistic for each action $\xb \in \cD_k$. For simplicity, we denote the optimal action at round $k$ as $\xb^*=\arg \max_{\xb \in \cD_k}(\btheta^*)^{\top}\xb$ and \eqref{eq:2-2} further implies that the regret at round $k$ can be upper bounded by
\begin{align}
    \Delta_k&=(\btheta^*)^{\top}\xb^*-(\btheta^*)^{\top}\xb_k\notag\\
    &\leq \btheta_{k}^{\top}\xb^*+\beta\sqrt{(\xb^*)^{\top}\bSigma_{k}^{-1} \xb^*}-(\btheta^*)^{\top}\xb_k\notag\\
    & \leq \btheta_{k}^{\top}\xb_k+\beta\sqrt{\xb_k^{\top}\bSigma_{k}^{-1} \xb_k}-(\btheta^*)^{\top}\xb_k\notag\\
    &= (\btheta_{k}-\btheta^*)^{\top}\xb_k+\beta\sqrt{\xb_k^{\top}\bSigma_{k}^{-1} \xb_k}\notag\\
    &\leq \|\btheta_{k}-\btheta^*\|_{\bSigma_k} \times \|\xb_{k}\|_{\bSigma_k^{-1}}+\beta\sqrt{\xb_k^{\top}\bSigma_{k}^{-1} \xb_k}\notag\\
    &\leq 2 \beta \sqrt{\xb_k^{\top}\bSigma_{k}^{-1} \xb_k},\notag
\end{align}
where the first inequality holds due to \eqref{eq:2-2}, the second inequality holds due to the selection rule in Algorithm (Line \ref{algorithm1-line8}), the third inequality holds due to the Cauchy-Schwarz inequality and the last inequality holds due to the definition of $\cE$ in Lemma \ref{lemma:weighted-concentration}. Thus, we finish the proof of Lemma \ref{lemma:one-step-regret}.
\end{proof}

\section{Auxiliary Lemmas}
\begin{lemma}[Azuma–Hoeffding inequality, \citealt{cesa2006prediction}]\label{lemma:azuma}
Let $\{\eta_k\}_{k=1}^K$ be a martingale difference sequence with respect to a filtration $\{\cG_{k}\}$ satisfying $|\eta_k| \leq R$ for some constant $R$, $\eta_k$ is $\cG_{k+1}$-measurable, $\EE\big[\eta_k|\cG_k\big] = 0$. Then for any $0<\delta<1$, with high probability at least $1-\delta$, we have 
\begin{align}
    \sum_{k=1}^K \eta_k\leq R\sqrt{2K \log (1/\delta)}.\notag
\end{align} 
\end{lemma}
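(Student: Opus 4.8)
The plan is to prove this via the classical Chernoff (exponential moment) method combined with Hoeffding's lemma, exploiting the martingale-difference structure through iterated conditioning. Throughout, write $S_j = \sum_{k=1}^{j}\eta_k$, so the target is a one-sided deviation bound on $S_K$.

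First I would fix an arbitrary $\lambda > 0$ and apply Markov's inequality to the exponentiated sum: for any threshold $t$,
\[
\Pr\Big(S_K \ge t\Big) \le e^{-\lambda t}\,\EE\big[e^{\lambda S_K}\big].
\]
This reduces the problem to controlling the moment generating function of the martingale sum. Next I would peel off the terms one at a time from the top using the tower property. Since $S_{K-1}$ is $\cG_K$-measurable, I factor
\[
\EE\big[e^{\lambda S_K}\big] = \EE\Big[e^{\lambda S_{K-1}}\,\EE\big[e^{\lambda \eta_K}\,\big|\,\cG_K\big]\Big].
\]
The key ingredient is Hoeffding's lemma: because $\eta_K$ is conditionally centered, $\EE[\eta_K\,|\,\cG_K]=0$, and bounded, $|\eta_K|\le R$ (hence supported on an interval of width $2R$), one has $\EE[e^{\lambda \eta_K}\,|\,\cG_K] \le \exp(\lambda^2 R^2/2)$ almost surely. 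Substituting this and iterating the same argument down to $k=1$ yields $\EE[e^{\lambda S_K}] \le \exp(K\lambda^2 R^2/2)$.

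Finally I would combine the two displays to obtain $\Pr(S_K \ge t) \le \exp\big(-\lambda t + K\lambda^2 R^2/2\big)$, then optimize the free parameter by setting $\lambda = t/(KR^2)$, which gives the sub-Gaussian tail $\Pr(S_K \ge t) \le \exp\big(-t^2/(2KR^2)\big)$. Setting the right-hand side equal to $\delta$ and solving for $t$ gives $t = R\sqrt{2K\log(1/\delta)}$, so with probability at least $1-\delta$ we have $S_K \le R\sqrt{2K\log(1/\delta)}$, which is exactly the claimed bound.

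The main obstacle is Hoeffding's lemma, the only non-routine step: establishing $\EE[e^{\lambda X}\,|\,\cG] \le e^{\lambda^2 R^2/2}$ for a conditionally centered, bounded $X$ requires a short convexity argument — upper bounding $e^{\lambda x}$ by its chord on $[-R,R]$, taking conditional expectations to kill the linear term, and then a second-order Taylor expansion of the resulting (deterministic) cumulant function to extract the Gaussian-type exponent. Everything else — the Chernoff step, the conditioning recursion, and the one-variable minimization over $\lambda$ — is standard bookkeeping.
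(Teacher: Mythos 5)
Your proposal is correct and is the canonical proof of Azuma--Hoeffding: Chernoff's exponential-moment bound, peeling terms via the tower property, the conditional Hoeffding lemma for each increment, and optimization over $\lambda$. The paper states this as an auxiliary lemma cited from \citet{cesa2006prediction} without proof, and the standard proof in that reference is exactly the argument you give, so there is nothing to reconcile.
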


\begin{lemma}[Lemma 9 in \citealt{abbasi2011improved}]\label{lemma: concentration-OFUL} Let $\{\epsilon_k\}_{k=1}^{K}$ be a real-valued stochastic process with corresponding filtration $ \{\mathcal{F}_k\}_{k=0}^{K}$ such that $\epsilon_k$ is $\mathcal{F}_k$-measure and $\epsilon_k$ is conditionally
$R$-sub-Gaussian, $i.e.$
\begin{align}
    \forall \lambda \in \RR, \EE\big[e^{\lambda\epsilon_k}|\mathcal{F}_{k-1}\big]\leq \exp\bigg(\frac{\lambda^2R^2}{2}\bigg)\notag.
\end{align}
Let $\{\xb_k\}_{k=1}^{K}$ be an $\RR^d$-valued stochastic process where $\xb_k$ is $\mathcal{F}_{k-1}$-measurable and for any $k \in [K]$, we further define $\bSigma_k=\lambda \Ib+\sum_{i=1}^{k}\xb_i\xb_i^{\top}$.
Then with probability at least $1-\delta$, for all $k\in[K]$, we have
\begin{align*}
    \bigg\|\sum_{i=1}^k\xb_i \eta_i\bigg\|_{\bSigma_k^{-1}}^2\leq 2R^2  \log\bigg(\frac{\det(\bSigma_k)^{1/2}\det(\bSigma_0)^{-1/2}}{\delta}\bigg).
\end{align*}
\end{lemma}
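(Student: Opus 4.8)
The plan is to establish this self-normalized tail bound by the \emph{method of mixtures}, which is the standard technique for producing a bound that holds uniformly over all rounds $k$ simultaneously. Write $\Sb_k = \sum_{i=1}^k \xb_i \epsilon_i$, so that the goal is to control $\|\Sb_k\|_{\bSigma_k^{-1}}^2$. First I would fix an auxiliary vector $\ub \in \RR^d$ and introduce the scalar process
\[
    M_k(\ub) = \exp\bigg(\sum_{i=1}^k \Big[\frac{\epsilon_i \la \ub, \xb_i\ra}{R} - \frac{1}{2}\la \ub, \xb_i\ra^2\Big]\bigg), \qquad M_0(\ub)=1.
\]
Since $\xb_i$ is $\mathcal{F}_{i-1}$-measurable and $\epsilon_i$ is conditionally $R$-sub-Gaussian, setting $s=\la \ub,\xb_i\ra$ and applying the sub-Gaussian inequality with parameter $s/R$ gives $\EE\big[\exp(s\epsilon_i/R - s^2/2)\mid \mathcal{F}_{i-1}\big]\le 1$. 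Hence $M_k(\ub)$ is a nonnegative supermartingale with $\EE[M_k(\ub)]\le 1$ for each fixed $\ub$.

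Next I would mix over $\ub$ against a Gaussian prior. Let $h$ be the density of $\mathcal{N}(\mathbf{0}, \lambda^{-1}\Ib)$ and set $\bar M_k = \int_{\RR^d} M_k(\ub)\, h(\ub)\, d\ub$. Because all quantities are nonnegative, Tonelli's theorem justifies interchanging expectation and integration, so $\bar M_k$ is again a nonnegative supermartingale with $\bar M_0 = 1$ and $\EE[\bar M_k]\le 1$. The key computation is that, as a function of $\ub$, the integrand is proportional to a Gaussian density; completing the square in $\ub$ inside the exponent reduces the integral to a standard Gaussian normalizing constant. Recalling $\bSigma_k = \lambda\Ib + \sum_{i=1}^k \xb_i\xb_i^\top$ and $\bSigma_0 = \lambda\Ib$, this yields the closed form
\[
    \bar M_k = \bigg(\frac{\det(\bSigma_0)}{\det(\bSigma_k)}\bigg)^{1/2} \exp\bigg(\frac{1}{2R^2}\|\Sb_k\|_{\bSigma_k^{-1}}^2\bigg).
\]

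Finally, to obtain the uniform-in-$k$ guarantee I would invoke Ville's maximal inequality for nonnegative supermartingales (equivalently, optional stopping at the first time $\bar M_k$ crosses the threshold): $\Pr\big(\exists k\ge 1:\ \bar M_k \ge 1/\delta\big)\le \delta\,\EE[\bar M_0]=\delta$. On the complementary event, of probability at least $1-\delta$, we have $\bar M_k < 1/\delta$ for all $k$; substituting the closed form, taking logarithms, and multiplying by $2R^2$ gives exactly $\|\Sb_k\|_{\bSigma_k^{-1}}^2 \le 2R^2\log\big(\det(\bSigma_k)^{1/2}\det(\bSigma_0)^{-1/2}/\delta\big)$. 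I expect the main obstacle to be precisely this last step: a naive argument that fixes $k$ and unions over time fails, since there are infinitely many rounds and $\det(\bSigma_k)$ grows without bound, so the simultaneous-in-$k$ control genuinely requires the supermartingale and maximal-inequality machinery rather than a union bound. A secondary technical point is carrying out the $d$-dimensional Gaussian integral cleanly and verifying the Tonelli interchange, but these are routine once the supermartingale structure is in place.
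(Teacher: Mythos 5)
Your argument is correct: it is precisely the method-of-mixtures proof of the self-normalized bound, and the paper itself does not reprove this lemma but imports it verbatim as Lemma 9 of \citet{abbasi2011improved}, whose original proof follows exactly the route you describe (the supermartingale $M_k(\ub)$, the Gaussian mixture with covariance $\lambda^{-1}\Ib$ yielding the closed form $\det(\bSigma_0)^{1/2}\det(\bSigma_k)^{-1/2}\exp\big(\tfrac{1}{2R^2}\|\Sb_k\|_{\bSigma_k^{-1}}^2\big)$, and Ville's maximal inequality for the uniform-in-$k$ statement). The only cosmetic point is that the lemma's display writes $\eta_i$ where the hypothesis names the noise $\epsilon_i$; you have correctly read these as the same process.
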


\begin{lemma}[Lemma 11 in \citealt{abbasi2011improved}]\label{Lemma:abba}
Let $\{\xb_k\}_{k=1}^{K}$ be a sequence of vectors in $\RR^d$, matrix $\bSigma_0$ a $d \times d$ positive definite matrix and define $\bSigma_k=\bSigma_0+\sum_{i=1}^{k} \xb_i\xb_i^{\top}$, then we have
\begin{align}
    \sum_{i=1}^{k} \min\Big\{1,\xb_i^{\top} \bSigma_{i-1}^{-1} \xb_i\Big\}\leq 2 \log \bigg(\frac{\det{\bSigma_k}}{\det{\bSigma_0}}\bigg).\notag
\end{align}
In addition, if $\|\xb_i\|_2\leq L$ holds for all $i\in [K]$, then 
\begin{align}
    \sum_{i=1}^{k} \min\Big\{1,\xb_i^{\top} \bSigma_{i-1}^{-1} \xb_i\Big\}\leq 2 \log \bigg(\frac{\det{\bSigma_k}}{\det{\bSigma_0}}\bigg)\leq 2\Big(d\log\big((\text{trace}(\bSigma_0)+kL^2)/d\big)-\log \det \bSigma_0\Big).\notag
\end{align}
\end{lemma}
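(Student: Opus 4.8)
The plan is to convert the additive sum on the left-hand side into a telescoping product of determinants, and then control each summand by a logarithmic term. First I would record the rank-one update structure: since $\bSigma_i = \bSigma_{i-1} + \xb_i\xb_i^\top$, the matrix determinant lemma gives $\det\bSigma_i = \det\bSigma_{i-1}\cdot\big(1 + \xb_i^\top\bSigma_{i-1}^{-1}\xb_i\big)$. Telescoping this identity over $i=1,\dots,k$ yields $\det\bSigma_k/\det\bSigma_0 = \prod_{i=1}^k\big(1 + \xb_i^\top\bSigma_{i-1}^{-1}\xb_i\big)$, and taking logarithms turns the product into the sum $\sum_{i=1}^k \log\big(1 + \xb_i^\top\bSigma_{i-1}^{-1}\xb_i\big)$.

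The key step is then the elementary scalar inequality $\min\{1,x\}\le 2\log(1+x)$, valid for all $x\ge 0$. I would verify this by splitting at $x=1$: for $x\in[0,1]$ the function $f(x)=2\log(1+x)-x$ has derivative $f'(x)=(1-x)/(1+x)\ge 0$ and $f(0)=0$, so $f\ge 0$; for $x\ge 1$ it suffices to note $2\log(1+x)\ge 2\log 2 > 1$. Applying this inequality termwise with $x = \xb_i^\top\bSigma_{i-1}^{-1}\xb_i\ge 0$ and summing gives $\sum_{i=1}^k \min\{1,\xb_i^\top\bSigma_{i-1}^{-1}\xb_i\} \le 2\sum_{i=1}^k\log\big(1+\xb_i^\top\bSigma_{i-1}^{-1}\xb_i\big) = 2\log\big(\det\bSigma_k/\det\bSigma_0\big)$, which is the first claimed bound.

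For the second bound I would pass from the determinant to the trace. By the AM--GM inequality applied to the eigenvalues of the positive definite matrix $\bSigma_k$, we have $\det\bSigma_k \le \big(\text{trace}(\bSigma_k)/d\big)^d$, and since $\text{trace}(\bSigma_k) = \text{trace}(\bSigma_0) + \sum_{i=1}^k\|\xb_i\|_2^2 \le \text{trace}(\bSigma_0) + kL^2$ under the norm bound $\|\xb_i\|_2\le L$, taking logarithms gives $\log\det\bSigma_k \le d\log\big((\text{trace}(\bSigma_0)+kL^2)/d\big)$. Combining this with $2\log(\det\bSigma_k/\det\bSigma_0) = 2\big(\log\det\bSigma_k - \log\det\bSigma_0\big)$ produces the stated upper bound.

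Since this is a classical result, no step is genuinely hard; the only point requiring care is the scalar inequality $\min\{1,x\}\le 2\log(1+x)$, which is where the factor $2$ enters and which must be checked to hold uniformly for all $x\ge 0$ rather than only for small $x$. Everything else follows directly from the rank-one determinant identity and the AM--GM estimate on $\det\bSigma_k$.
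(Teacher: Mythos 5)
Your proof is correct and is essentially the standard argument for this classical elliptical potential lemma, which the paper simply imports from \citet{abbasi2011improved} without reproving: the rank-one determinant identity and telescoping, the scalar bound $\min\{1,x\}\leq 2\log(1+x)$ for $x\geq 0$ (which you verify correctly on both ranges), and the AM--GM estimate $\det\bSigma_k\leq(\text{trace}(\bSigma_k)/d)^d$ combined with the trace bound $\text{trace}(\bSigma_k)\leq\text{trace}(\bSigma_0)+kL^2$. No gaps.
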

\begin{lemma}[Corollary 7.7.4. (a) in \citealt{horn2012matrix}]\label{lemma:inverse}
Let $\Ab,\Bb$ be a Hermitian matrix in $\RR^{d\times d}$ and suppose $\Ab,\Bb \succ \zero$, then $\Ab \succeq \Bb$ if and only if $\Bb^{-1} \succeq \Ab^{-1}$.
\end{lemma}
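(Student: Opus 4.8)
The plan is to prove only the forward implication $\Ab \succeq \Bb \Rightarrow \Bb^{-1} \succeq \Ab^{-1}$. The reverse implication then follows for free by applying the forward direction with $\Ab$ and $\Bb$ replaced by the (Hermitian positive definite) matrices $\Bb^{-1}$ and $\Ab^{-1}$, together with the involution $(\Ab^{-1})^{-1}=\Ab$. So the entire content is this single implication.

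First I would use the positive definiteness of $\Bb$ to form its Hermitian square root $\Bb^{1/2}$, which is invertible with inverse $\Bb^{-1/2}$. The key move is to conjugate the hypothesis $\Ab-\Bb\succeq\zero$ by $\Bb^{-1/2}$. Congruence preserves the Loewner order, since for every $\xb\in\RR^d$ one has $\xb^\top\Bb^{-1/2}(\Ab-\Bb)\Bb^{-1/2}\xb=(\Bb^{-1/2}\xb)^\top(\Ab-\Bb)(\Bb^{-1/2}\xb)\ge 0$. Hence $\Bb^{-1/2}\Ab\Bb^{-1/2}\succeq\Bb^{-1/2}\Bb\Bb^{-1/2}=\Ib$, i.e.\ the Hermitian positive definite matrix $\Bb^{-1/2}\Ab\Bb^{-1/2}$ dominates the identity.

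The heart of the argument is the scalar-to-matrix step: a Hermitian positive definite matrix dominating $\Ib$ has an inverse dominated by $\Ib$. I would justify this through the spectral theorem --- $\Bb^{-1/2}\Ab\Bb^{-1/2}\succeq\Ib$ is equivalent to all of its eigenvalues being at least $1$, so all eigenvalues of its inverse $\Bb^{1/2}\Ab^{-1}\Bb^{1/2}$ are at most $1$, which is exactly $\Bb^{1/2}\Ab^{-1}\Bb^{1/2}\preceq\Ib$. Conjugating this last inequality by $\Bb^{-1/2}$ (again congruence preserves order) gives $\Ab^{-1}\preceq\Bb^{-1}$, i.e.\ $\Bb^{-1}\succeq\Ab^{-1}$, as desired.

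The only genuinely nontrivial ingredient is this middle spectral reversal; everything else is bookkeeping with congruences and the involutive symmetry that collapses the biconditional to one implication. Since the result is the classical operator antitonicity of matrix inversion, the square-root/congruence route is the cleanest, because it keeps the Hermitian structure explicit throughout. An alternative would be to diagonalize $\Bb^{-1/2}\Ab\Bb^{-1/2}$ and argue coordinatewise, avoiding square roots of $\Ab$ altogether, but it amounts to the same eigenvalue comparison.
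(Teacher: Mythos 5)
The paper does not prove this lemma at all: it is quoted verbatim as Corollary 7.7.4(a) of Horn and Johnson and used as a black box, so there is no in-paper argument to compare against. Your proof is correct and is the standard one for operator antitonicity of inversion: the reduction of the biconditional to one implication via the involution $(\Ab^{-1})^{-1}=\Ab$ is valid, congruence by the Hermitian invertible matrix $\Bb^{-1/2}$ does preserve the Loewner order exactly as you compute, and the spectral step (a Hermitian positive definite matrix dominates $\Ib$ iff its eigenvalues are all at least $1$, iff its inverse is dominated by $\Ib$) correctly handles the only nontrivial reversal. This matches the textbook route, so nothing further is needed.
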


\bibliographystyle{ims}
\bibliography{reference}
\end{document}